\numberwithin{equation}{section}
\newtheorem*{rep@theorem}{\rep@title}
\newcommand{\newreptheorem}[2]{%
\newenvironment{rep#1}[1]{%
 \def\rep@title{#2 \ref{##1}}%
 \begin{rep@theorem}}%
 {\end{rep@theorem}}}
\newtheorem{theorem}{Theorem}[section]
\newtheorem{cor}[theorem]{Corollary}
\newtheorem{fac}[theorem]{Fact}
\theoremstyle{definition}
\newtheorem{definition}[theorem]{Definition}
\newtheorem{remark}[theorem]{Remark}
\newtheorem*{claim*}{Claim}
  \def \B{\operatorname{\mathcal{B}}}
  \def \VC{\operatorname{VC}}
      \def \PAC{\operatorname{PAC}}
    \def \fin{\operatorname{dist}}
 \def \ttimes{\times \ldots \times}
\begin{document}

\title{Higher-arity PAC learning, VC dimension and packing lemma}
\author{Artem Chernikov and Henry Towsner}
\date{\today}

\begin{abstract}
  The aim of this note is to overview some of our work in \cite{chernikov2020hypergraph}  developing higher arity VC theory (VC$_n$ dimension), including a generalization of Haussler packing lemma, and an associated tame (slice-wise) hypergraph regularity lemma; and to demonstrate that it characterizes higher arity PAC learning (PAC$_n$ learning) in $n$-fold product spaces with respect to product measures  introduced by Kobayashi, Kuriyama and Takeuchi \cite{Kobayashi, KotaPAC}. We also point out how some of the recent results in \cite{coregliano2024high, coregliano2025packing, coregliano2025tt} follow from our work in \cite{chernikov2020hypergraph}.
\end{abstract}

\maketitle
%

\section{Introduction}

In \cite{chernikov2020hypergraph} (following some preliminary work in \cite{chernikov2024nPreprint}), we  developed the foundations of higher arity VC-theory, or VC$_k$-theory, for families of subsets in $k$-fold product spaces (and in fact more generally, for families of real valued functions). The notion of  VC$_k$ dimension (Definition \ref{def: VCk dim}) is implicit in Shelah's work on \emph{$k$-dependent theories} in model theory \cite{MR3666349, MR3273451}, and is studied quantitatively in \cite{chernikov2024nPreprint} (published as \cite{chernikov2014n}) and further on the model theory side in \cite{MR3509704, chernikov2017mekler, chernikov2019n, chernikov2024n}. In particular, answering a question of Shelah from \cite{MR3273451}, \cite{chernikov2024nPreprint} established an appropriate generalization of Sauer-Shelah lemma for $\VC_k$ dimension (Fact \ref{fac: n-dep Sauer-Shelah}; see Remark \ref{rem: hist of Saur-Shelah} for its history). Following this, \cite{Kobayashi, KotaPAC} proposed a higher arity generalization of $\PAC$ learning to $\PAC_k$ learning for families of sets in $k$-fold product spaces (Section \ref{sec: PACn learning}).

One of the main result in \cite{chernikov2020hypergraph} is a higher arity generalization of Haussler's packing lemma from $\VC$ to $\VC_k$-dimension (Fact \ref{fac: finite VCk-dim implies approx bounded}); and its converse, that packing lemma uniformly over all measures implies finite $\VC_k$-dimension, see Remark \ref{rem: VC_k dim and packing lemma equiv}). It was used in \cite{chernikov2020hypergraph} to obtain a tame regularity lemma for hypergraphs of finite $\VC_k$-dimension (generalizing from $\VC$-dimension and graphs \cite{alon2007efficient, lovasz2010regularity, hrushovski2013generically}; an analogous hypergraph regularity lemma for $k=3$ was established in \cite{terry2021irregular} using different methods).

Our main new observation here is that the packing lemma from \cite{chernikov2020hypergraph}  quickly implies the equivalence of finite VC$_k$-dimension and PAC$_k$-learnability:
\begin{cor}[Corollary \ref{cor: everything equiv}]
	The following are equivalent for a class $\mathcal{F}$ of subsets of $V_1 \times \ldots \times V_k$:
	\begin{enumerate}
		\item $\mathcal{F}$ has finite $\VC_k$-dimension (Definition \ref{def: VCk dim});
		\item $\mathcal{F}$ satisfies the packing lemma (in the sense of Fact \ref{fac: finite VCk-dim implies approx bounded});
		\item $\mathcal{F}$ is (properly) $\PAC_k$-learnable (in the sense of Definition \ref{def: PAC_k learn}).
	\end{enumerate}
\end{cor}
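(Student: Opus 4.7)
The plan is to establish the cycle (1) $\Leftrightarrow$ (2) and (2) $\Rightarrow$ (3) $\Rightarrow$ (1). The equivalence (1) $\Leftrightarrow$ (2) is already in place: (1) $\Rightarrow$ (2) is the higher arity packing lemma of Fact \ref{fac: finite VCk-dim implies approx bounded}, and (2) $\Rightarrow$ (1) is the observation in Remark \ref{rem: VC_k dim and packing lemma equiv} that a uniform packing bound over all product measures on arbitrarily large shattered boxes would be violated if $\VC_k(\mathcal{F}) = \infty$. So the new content is (2) $\Rightarrow$ (3) and (3) $\Rightarrow$ (1).

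For (2) $\Rightarrow$ (3) I intend to execute the higher arity analogue of the classical route \emph{packing $\Rightarrow$ uniform convergence $\Rightarrow$ PAC via ERM}. Fix a product measure $\mu = \mu_1 \otimes \cdots \otimes \mu_k$ on $V_1 \times \cdots \times V_k$ and a target $F^\star \in \mathcal{F}$. Given a $\PAC_k$ sample of marginal sets $S_i \subseteq V_i$ together with the induced labels on $S_1 \times \cdots \times S_k$, the packing lemma bounds the number of $\varepsilon$-separated members of $\mathcal{F}$ under the empirical product measure by a polynomial in $1/\varepsilon$ depending only on $\VC_k(\mathcal{F})$. A symmetrization / ghost sample argument, in which the ghost sample is itself chosen as a product of marginal samples so as to keep the pair-empirical measure a product empirical measure, then yields
\[
\Pr\bigl[\, \exists F \in \mathcal{F} :\ |\mu(F) - \hat{\mu}(F)| > \varepsilon \,\bigr] \leq \delta
\]
for sample sizes polynomial in $\VC_k(\mathcal{F})$, $1/\varepsilon$, and $\log(1/\delta)$. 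Empirical risk minimization --- output any $H \in \mathcal{F}$ consistent with the observed labels --- then produces a proper $\PAC_k$ learner.

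For (3) $\Rightarrow$ (1) I argue the contrapositive via a no-free-lunch construction. If $\VC_k(\mathcal{F}) = \infty$, then for each $d$ one may pick $A_i \subseteq V_i$ with $|A_i| = d$ such that $A_1 \times \cdots \times A_k$ is shattered by $\mathcal{F}$ in the sense of Definition \ref{def: VCk dim}. Take $\mu$ to be the uniform product distribution on this box. A learner that receives $m$ marginal samples observes at most $m^k$ of the $d^k$ cells of the box; for $d$ chosen much larger than $m$, a constant fraction of cells remains unseen. By shattering, for any output hypothesis the adversary may pick a target in $\mathcal{F}$ that agrees with the learner on seen cells and disagrees on unseen ones, forcing expected error bounded below by an absolute constant. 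Hence no algorithm is a $\PAC_k$ learner.

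I expect the main obstacle to be verifying that the symmetrization step in (2) $\Rightarrow$ (3) interacts correctly with the product structure. The standard VC argument requires the ghost sample to have the same distributional form as the original, and here that means replacing each marginal $S_i$ by an independent copy $S_i'$ and swapping cell-by-cell in $S_1 \times \cdots \times S_k$ versus $S_1' \times \cdots \times S_k'$; one has to check that the resulting concentration reduces to the packing bound applied to a product empirical measure rather than an arbitrary one. Once this compatibility is verified, the remainder is a routine union bound over an $\varepsilon$-packing combined with concentration on each fixed $F$, and ERM converts the resulting uniform convergence into proper $\PAC_k$-learnability.
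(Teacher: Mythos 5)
Your step (2) $\Rightarrow$ (3) contains a genuine gap, and it is precisely the point where the higher-arity setting departs from the classical one. You assert that ``the packing lemma bounds the number of $\varepsilon$-separated members of $\mathcal{F}$ under the empirical product measure,'' and you then aim for uniform convergence $|\mu(F)-\hat\mu(F)|\leq\varepsilon$ followed by ERM. For $k\geq 2$ the packing lemma (Fact \ref{fac: finite VCk-dim implies approx bounded}) does \emph{not} bound the number of $\varepsilon$-separated members of $\mathcal{F}$: it only says each $S\in\mathcal{F}$ is close to a Boolean combination of finitely many \emph{fixed} sets together with lower-arity fibers \emph{of $S$ itself}, which vary with $S$. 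A concrete failure: $\mathcal{F}=\{A\times V_2 : A\subseteq V_1\}$ has $\VC_2(\mathcal{F})\leq 1$, yet contains continuum many pairwise $\varepsilon$-separated sets under an atomless product measure, uniform convergence of the grid-empirical measure fails (take $A$ of measure $1/2$ disjoint from the sampled first coordinates), and a hypothesis consistent with the target on the sampled grid can have error close to $1$. So both the finite $\varepsilon$-net and the uniform convergence statement you rely on are false, and ERM on grid labels cannot work. The class \emph{is} trivially $\PAC_2$-learnable, but only because the learner receives the full lower-arity slices of the target (Remark \ref{rem: reading fibers from Takeuchi}), not just grid labels --- and any correct proof must use this. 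The paper's proof (Theorem \ref{thm: pack implies PAC}) does exactly that: it invokes the strengthened packing lemma with a \emph{positive measure} of witness tuples (Fact \ref{fac: packing with pos measure}), amplifies that positive measure to $1-\delta'$ by repetition, reads the relevant lower-arity fibers of the target off the sample, uses the law of large numbers only to \emph{select} empirically a good Boolean combination $D$, and finally outputs some $S'\in\mathcal{F}$ close to $D$. None of this is a symmetrization/ghost-sample argument, and I do not see how your route can be repaired without effectively reconstructing it.

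Two smaller points. For (3) $\Rightarrow$ (1) the paper simply cites Fact \ref{fac: Kota one dir}; your no-free-lunch sketch is the right idea but miscounts what the learner sees: from $m$ sampled points it observes not $m^k$ cells of the $d^k$-box but all cells meeting one of the sampled hyperplane slices (still a vanishing fraction for $d\gg km$, so the argument survives), and the adversary step should be run as the usual averaging over a uniformly random shattering target rather than a choice made after seeing the output. For (2) $\Rightarrow$ (1), note that the paper's route is not a direct violation of the packing bound on shattered boxes but goes through the regularity lemma (Remark \ref{rem: VC_k dim and packing lemma equiv}); if you want a direct argument you must actually supply it, since for $k\geq 2$ the packing conclusion allows fibers of $S$ itself and the counting is not the one-line classical one.
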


 We prove it in Section \ref{sec: VCk PACk} from our packing lemma for $\VC_k$-dimension analogously to the proof that Haussler's packing lemma implies PAC learnability. This  observation was presented at the Harvard Center of Mathematical Sciences and Applications colloquium in October 2024 and  Carnegie Mellon University Logic Seminar in November 2024.


There appears to be some recent interest in higher arity VC theory \cite{coregliano2024high, coregliano2025packing, coregliano2025tt}, which includes reproving some results equivalent to those in \cite{chernikov2024nPreprint}, \cite{KotaPAC} and \cite{chernikov2020hypergraph}. In particular, some of the main results in these papers follow from or are implicit in \cite{chernikov2024nPreprint}, \cite{KotaPAC} and \cite{chernikov2020hypergraph}, at least qualitatively (the latter paper did not investigate the bounds). In Section \ref{sec: slicewise pack} of this paper we state a slice-wise version of our packing lemma for VC$_k$ dimension (Corollary \ref{cor: slice-wise packing lemma}, implicit in our proof that packing lemma implies slice-wise hypergraph regularity for $\VC_k$ in \cite{chernikov2020hypergraph}) and note its equivalence in the case $k = 1$ to the main result of \cite{coregliano2025packing} (which is a packing lemma for relations of finite slice-wise VC-dimension, relying on the main results of \cite{coregliano2024high}). While this note was in preparation, \cite{coregliano2025tt} considered a version of PAC$_k$-learnability equivalent to the one in \cite{Kobayashi, KotaPAC}, and announced a result analogous to Corollary \ref{cor: everything equiv} (again, relying on a packing lemma equivalent to ours from \cite{chernikov2020hypergraph}). See also Remark \ref{rem: hist of Saur-Shelah}.

\subsection*{Acknowledgements}
Chernikov was partially supported by the NSF Research Grant DMS-2246598; and by the Deutsche Forschungsgemeinschaft (DFG, German Research Foundation) under Germany's Excellence Strategy -- EXC-2047/1 -- 390685813. He thanks Hausdorff Research Institute for Mathematics in Bonn, which hosted him during the Trimester Program ``Definability, decidability, and computability'' where part of this paper was written.  Towsner was supported by NSF grant DMS-2452009.

\section{VC$_k$ dimension}
We review the notion of VC$_k$-dimension for families of subsets of $k$-fold product sets $V_1 \times \ldots \times V_k$, for $k \in \mathbb{N}$, generalizing the usual Vapnik-Chervonenkis dimension in the case $k=1$. It is implicit in Shelah's work on \emph{$k$-dependent theories} in model theory \cite{MR3666349, MR3273451}, and is formally introduced and studied quantitatively in \cite{chernikov2024nPreprint}.  For $n \in \mathbb{N}$, we write $[n] := \{1, \ldots, n\}$.

\begin{definition}\label{def: VCk dim}
For $k \in \mathbb{N}$, let $V_1, \ldots, V_{k}$ be sets. Let $\mathcal{F} \subseteq V_1 \times \ldots V_k$ be a family of sets.  We say that $\mathcal{F}$ has \emph{$\VC_k$-dimension $\geq d$}, or $\VC_k(E) \geq d$, if there is a \emph{$k$-dimensional $d$-box} $A = A_1 \ttimes A_k$ with $A_i \subseteq V_i$ and $|A_i|=d$ for $i=1, \ldots, k$ \emph{shattered} by $\mathcal{F}$. That is, for every $A' \subseteq A$, there is some $S \in  \mathcal{F}$ such that $A' = A \cap S$.
We say that $\VC_k(\mathcal{F}) = d$ if $d$ is maximal such that there is a $d$-box shattered by $\mathcal{F}$, and $\VC_k(\mathcal{F}) = \infty$ if there are $d$-boxes shattered by $\mathcal{F}$ for arbitrarily large $d$. 

Given sets $V_1, \ldots, V_{k+1}$ and $E \subseteq V_1 \times \ldots \times V_{k+1}$, we let $\VC_k(E) := \VC_k(\mathcal{F}_E)$, where $\mathcal{F}_E = \{E_b \subseteq V_1\times \ldots \times V_{k} : b \in V_{k+1}\}$ and $E_b = \{(b_1, \ldots, b_k) \in V_1 \times \ldots \times V_k : (b_1, \ldots, b_k, b) \in E\}$ is the slice of $E$ at $b$.
\end{definition}

In the case $k=1$ and $\mathcal{F} \subseteq V_1 $, $\VC_1(\mathcal{F}) = d$ simply means that the family $\mathcal{F} $ has $\VC$-dimension $d$. 

In \cite{chernikov2020hypergraph}, we also consider a generalization of $\VC_k$-dimension for families of real valued functions rather than just sets.

\section{Sauer-Shelah lemma for VC$_k$-dimension}

The following is a generalization of the Sauer-Shelah lemma from VC$_1$ to VC$_k$-dimension:

\begin{fac}\label{fac: n-dep Sauer-Shelah}\cite[Proposition 3.9]{chernikov2024nPreprint} 
If $\mathcal{F} \subseteq V_1 \ttimes V_{k+1}$ satisfies $\VC_{k}(\mathcal{F}) < d$, then there is some $\varepsilon = \varepsilon(d) \in \mathbb{R}_{>0}$ such that: for any $A = A_1 \ttimes A_{k} \subseteq V_1 \ttimes V_k$ with $|A_1| = \ldots = |A_{k}| = m$, there are at most $2^{m^{k - \varepsilon}}$ different sets $A' \subseteq A$ such that $A' = A \cap S$ for some $S \in \mathcal{F}$.
\end{fac}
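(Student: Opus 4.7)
The plan is to adapt the shifting proof of the classical Sauer--Shelah lemma, replacing the standard count $\sum_{i<d}\binom{n}{i}$ in the last step by Erd\H{o}s's extremal bound for $k$-partite $k$-uniform hypergraphs avoiding a complete $d$-box. First I would reduce to the case where $\mathcal{F}$ is a family of subsets of the finite ground set $A = A_1 \times \ldots \times A_k$ of size $m^k$ by passing to the traces $\{A \cap S : S \in \mathcal{F}\}$, which preserves both the hypothesis $\VC_k(\mathcal{F}) < d$ and the quantity to be bounded. Next, iterate the coordinatewise shifting operation at each $v \in A$ (replace $S \in \mathcal{F}$ containing $v$ by $S \setminus \{v\}$ whenever $S \setminus \{v\} \notin \mathcal{F}$), obtaining after finitely many rounds a downward-closed family $\mathcal{F}^* \subseteq 2^A$ with $|\mathcal{F}^*| = |\mathcal{F}|$. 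A case analysis essentially identical to the classical setting --- if $v \in B$ and $B' \subseteq B \setminus \{v\}$, realize $B'$ in $\mathcal{F}$ as $S'' \setminus \{v\}$ for some $S'' \in \mathcal{F}$ realizing $B' \cup \{v\}$ on $B$, and use unshifted sets to realize $B'$ with $v \in B'$ --- shows that any box $B$ shattered by $\mathcal{F}^*$ is already shattered by $\mathcal{F}$, so $\VC_k(\mathcal{F}^*) < d$.

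In a downward-closed family a box $B$ is shattered iff $B \in \mathcal{F}^*$: shattering forces the full trace $B$ to appear as $B \cap S$, whence $B \subseteq S$ and $B \in \mathcal{F}^*$ by downward closure. Hence $\VC_k(\mathcal{F}^*) < d$ is equivalent to saying that no $S \in \mathcal{F}^*$, viewed as the edge set of a $k$-partite $k$-uniform hypergraph on parts $A_1, \ldots, A_k$, contains $K^{(k)}_{d, \ldots, d}$ as a sub-hypergraph. By Erd\H{o}s's extension of the K\H{o}v\'ari--S\'os--Tur\'an theorem to $k$-partite hypergraphs, every such $S$ has at most $C\, m^{k - 1/d^{k-1}}$ edges for a constant $C = C(d,k)$; summing binomial coefficients gives
\[
  |\mathcal{F}| = |\mathcal{F}^*|
    \leq \sum_{i=0}^{\lfloor C\, m^{k - 1/d^{k-1}}\rfloor} \binom{m^k}{i}
    \leq (e\, m^k)^{C\, m^{k - 1/d^{k-1}}}
    \leq 2^{m^{k-\varepsilon}}
\]
for any $\varepsilon < 1/d^{k-1}$ once $m$ is sufficiently large (the extra $\log m$ from $\log(em^k)$ is absorbed into the polynomial slack).

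The main obstacle is the shifting step: one must verify that coordinatewise shifting does not introduce new \emph{box-shaped} shattered configurations, not just set-shaped ones. Fortunately the shifting operation is defined on individual points of $A$, so the classical case analysis only depends on whether the shift point lies in the shattered set and transfers verbatim to boxes --- no use is made of the fact that a shattered set is arbitrary. Once this is checked, the extremal bound of Erd\H{o}s does the rest mechanically, and the exponent $\varepsilon(d)$ one extracts is essentially $1/d^{k-1}$.
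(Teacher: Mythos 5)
Your proposal is correct and is essentially the argument behind the cited \cite[Proposition 3.9]{chernikov2024nPreprint}: down-shifting to a downward-closed trace family (noting, as you do, that the shifting lemma is a statement about each fixed test set and so applies verbatim to boxes), then bounding $|\mathcal{F}^*|$ by $\sum_{i<z}\binom{m^k}{i}$ with $z$ the Zarankiewicz number controlled by Erd\H{o}s's $K^{(k)}_{d,\ldots,d}$-free extremal bound --- exactly the bound recorded in the remark following the Fact. The only caveat is that, as you note, absorbing the $\log m$ factor requires $m$ large relative to $d$ and $k$, which is the intended reading of the statement.
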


\begin{remark}
More precisely, if $\VC_k \leq d$, then the upper bound above in \cite[Proposition 3.9]{chernikov2024nPreprint} is actually given by $\sum_{i<z} {m^k \choose i} \leq 2^{m^{k-\varepsilon}}$ for $m \geq k$, where $z = z_k(m, d+1)$ is the Zarankiewicz number, i.e.~the minimal natural number $z$ satisfying: every $k$-partite $k$-hypergraph with parts of size $m$ and $\geq z$ edges contains the complete $k$-partite hypergraph with each part of size $d+1$. If $k=1$, then $z_1(m,d+1) = d+1$, hence the bound in Fact \ref{fac: n-dep Sauer-Shelah}  coincides with the Sauer-Shelah bound, and for a general $k$ the exponent in Fact \ref{fac: n-dep Sauer-Shelah} appears close to optimal (see \cite[Proposition 3.9]{chernikov2024nPreprint} for the details).
\end{remark}

\begin{remark}\label{rem: hist of Saur-Shelah}
	In \cite[Conclusion 5.66]{MR3273451} Shelah observed the converse implication: if for infinitely many $m$, there are at most $2^{m^{k - \varepsilon}}$ different sets $A' \subseteq A$ such that $A' = A \cap S$ for some $S \in \mathcal{F}$, then the family has finite VC$_k$ dimension; and asked \cite[Question 5.67(1)]{MR3273451} if Fact \ref{fac: n-dep Sauer-Shelah} was true.  Hence \cite[Proposition 3.9]{chernikov2024nPreprint} answered Shelah's question, also demonstrating its qualitative optimality. The discussion in \cite[Page 10 and Section 7]{coregliano2025tt} misstates this and fails to acknowledge \cite{chernikov2024nPreprint}.
\end{remark}

\section{PAC$_n$ learning}\label{sec: PACn learning}
For simplicity of presentation we are going to ignore  measurability issues here and just restrict to arbitrarily large finite probability spaces. However, all of the results from \cite{chernikov2020hypergraph} cited in this note hold at the level of generality of (partite) \emph{graded probability spaces} (which includes both countable/separable families in products of Borel spaces and arbitrarily large finite probability spaces/their ultraproducts), we refer to \cite[Section 2.2]{chernikov2020hypergraph} for a detailed discussion of the setting.

We first recall the classical PAC learning:
\begin{definition}
Let $\mathcal{F}$ be a family of measurable subsets of a probability space $(V, \mathcal{B},\mu)$, which we also think about as their indicator functions $V \to \{0,1\}$.

	\begin{enumerate}
	\item For $\bar{a} \in V^m$, let $f \restriction_{\bar{a}} := ((a_1, f(a_1)), \ldots, (a_m, f(a_m)))$.
		\item Let $\mathcal{F}_{\fin} := \left\{  f \restriction_{\bar{a}} : f \in \mathcal{F}, \bar{a} \in V^m, m \in \mathbb{N} \right\}$.
	\item A function $H : \mathcal{F}_{\fin} \to \mathcal{B}$ is a \emph{learning function} for $\mathcal{F}$ if for every $\varepsilon,\delta >0$ there is $N_{\varepsilon, \delta} \in \mathbb{N}$ satisfying:
	for every $m \geq N_{\varepsilon, \delta}$, probability measure $\mu$ on $(V,\mathcal{B})$ and $f \in \mathcal{F}$ we have
	$$\mu^{\otimes m}\left( \left\{ \bar{a} \in V^m : \mu \left( H \left(f \restriction_{\bar{a}}  \right) \Delta f \right) > \varepsilon \right\} \right) \leq \delta.$$
	\item 	A learning function $H$ is \emph{proper} if its image is contained in $\mathcal{F}$.
	\item 	$\mathcal{F}$ is (properly) \emph{PAC-learnable} if $\mathcal{F}$ admits a (proper) learning function $H$.
	
\end{enumerate}
\end{definition}

\begin{fac}\cite{blumer1989learnability}
 A class $\mathcal{F}$ is (properly) PAC-learnable if and only if $\VC(\mathcal{F}) < \infty$.
\end{fac}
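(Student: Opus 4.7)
The plan is to prove both directions separately, using the classical case $k=1$ of Fact \ref{fac: n-dep Sauer-Shelah} (Sauer-Shelah) together with standard symmetrization arguments; the ``if'' direction is the conceptually substantial one.

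For the ``if'' direction, assume $d := \VC(\mathcal{F}) < \infty$ and take the empirical risk minimization (ERM) learner: define $H(f \restriction_{\bar{a}})$ to be any $S \in \mathcal{F}$ consistent with $f \restriction_{\bar{a}}$ (such $S$ exists because $f$ itself is). To verify the PAC guarantee, I would prove the uniform convergence statement that for $m$ large compared to $d, 1/\varepsilon, \log(1/\delta)$, the $\mu^{\otimes m}$-probability that some $S \in \mathcal{F}$ has $\mu(S \Delta f) > \varepsilon$ while agreeing with $f$ on $\bar{a}$ is at most $\delta$. The standard route is the ``double-sample trick'': pass to two independent samples $\bar{a}, \bar{a}'$ of size $m$ and use symmetrization to reduce to bounding, on the combined $2m$-point sample, the number of distinct traces $\{S \cap (\bar{a} \cup \bar{a}') : S \in \mathcal{F}\}$. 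By Sauer-Shelah this is polynomial in $m$ with exponent $d$, so a Chernoff/Hoeffding tail bound on each trace and a union bound close the argument.

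For the ``only if'' direction, I would argue the contrapositive adversarially. Suppose $\VC(\mathcal{F}) = \infty$, and let $N = N_{\varepsilon, \delta}$ be the purported sample complexity of some learning function $H$ for, say, $\varepsilon = \delta = 1/8$. Pick $n \gg N$ with a shattered set $A \subseteq V$ of size $n$, put $\mu$ equal to the uniform measure on $A$, and let the target $f$ be chosen uniformly at random among all $2^n$ subsets of $A$ (each realized by some element of $\mathcal{F}$ by shattering). For any sample $\bar{a}$ of size $N$, the conditional distribution of $f$ on $A \setminus \bar{a}$ remains uniform, so no matter what $H(f \restriction_{\bar{a}})$ outputs, its expected symmetric difference with $f$ on the unseen part of $A$ is exactly half the unseen mass, namely $\frac{1}{2}(1 - N/n)$. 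For $n$ large this exceeds $\varepsilon$ on average, so a pigeonhole/Fubini swap produces, for some fixed $f \in \mathcal{F}$, a $\mu^{\otimes N}$-mass of bad samples exceeding $\delta$, contradicting the PAC guarantee. To obtain a proper learner the same argument applies unchanged, since the above does not use any restriction on the range of $H$.

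The main obstacle is controlling constants in the forward direction: making the symmetrization lemma (comparing one-sample tail events to two-sample tail events) quantitative enough that the polynomial bound from Sauer-Shelah, with exponent $d$, beats the exponential Chernoff decay on each trace. This is routine but delicate; once the sample complexity is shown to be polynomial in $d, 1/\varepsilon, \log(1/\delta)$, both PAC-learnability and proper PAC-learnability are immediate consequences since the ERM learner is itself proper.
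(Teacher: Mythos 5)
The paper does not prove this statement at all --- it is quoted as a classical fact with a citation to Blumer--Ehrenfeucht--Haussler--Warmuth, and your argument (ERM plus the double-sample/symmetrization uniform-convergence bound via Sauer--Shelah for the forward direction, and the random-target lower bound on a large shattered set for the converse) is precisely the standard proof from that source. The outline is correct, so there is nothing to compare beyond noting that your route coincides with the cited one.
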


Motivated by the work in \cite{chernikov2024nPreprint} on $\VC_n$-dimension, Kobayashi, Kuriyama and Takeuchi \cite{Kobayashi, KotaPAC} proposed a higher arity generalization of $\PAC$ learning:
\begin{definition}\label{def: PAC_k learn}
	\begin{enumerate}

\item Let $n \in \mathbb{N}_{\geq 1}$ be fixed, $V = V_1 \ttimes V_{n}$ and $\mathcal{F}$ a family of subsets of $V$.
\item For a tuple $\bar{a} = (a_1, \ldots, a_n) \in V$, let $D_n(\bar{a})$ be the set 
$$\bigcup_{i \in [n]}\{(x_1, \ldots, x_{i-1}, a_{i}, x_{i+1}, \ldots, x_n) \in V : x_j \in V_j\}.$$
\item For a set $S \subseteq V$, let $f \restriction_{S} := \{ (\bar{x}, f(\bar{x})) : \bar{x} \in S \}$. And for a
	\item For any $m \in \mathbb{N}$ and tuple $\vec{a} = (\bar{a}_1, \ldots, \bar{a}_{m}) \in V^m$, where $\bar{a}_i = (a_{i,1}, \ldots, a_{i,n}) \in V$, and $f \in \mathcal{F}$, we let 
	$$f \restriction_{D_n(\vec{a})} := \Big(  f \restriction_{D_n(\bar{a}_1)}, \ldots,   f \restriction_{D_n(\bar{a}_m)}  \Big).$$

		\item Let 
		$$\mathcal{F}^n_{\fin} := \left\{  f \restriction_{D_n(\vec{a})}: f \in \mathcal{F}, \vec{a} \in V^m, m \in \mathbb{N} \right\}.$$
	\item A function $H : \mathcal{F}^n_{\fin} \to \mathcal{F}$ is a \emph{learning function} for $\mathcal{F}$ if for every $\varepsilon,\delta >0$ there is $N_{\varepsilon, \delta} \in \mathbb{N}$ satisfying:
	for every $m \geq N_{\varepsilon, \delta}$, probability measures $\mu_i$ on $(V_i,\mathcal{B}_i)$ and $f \in \mathcal{F}$ we have
	$$\mu^{\otimes m}\left( \left\{ \vec{a} \in V^m : \mu \left( H \left(f \restriction_{D_n(\vec{a})} \right)  \Delta f \right) > \varepsilon \right\} \right) \leq \delta,$$
	where $\mu$ is the product measure $\mu_1 \otimes \ldots \otimes \mu_{n}$ on $V^m$.
	\item A learning function $H$ is \emph{proper} if its image is contained in $\mathcal{F}$.
	\item $\mathcal{F}$ is (properly) \emph{PAC$_n$-learnable} if $\mathcal{F}$ admits a (proper) learning function $H$.
\end{enumerate}
\end{definition}

\begin{remark}\label{rem: reading fibers from Takeuchi}

Assuming $|V_i|\geq 2$ for all $i$, given $D_n(\bar{a})$, we can recover $\bar{a}= (a_1, \ldots, a_m)$ (by asking for the common $i$th coordinate of any two points in $D(\bar{a})$ with all other coordinates pairwise distinct) and the sets $\left\{ (x_1, \ldots, x_{i-1}, a_i, x_{i+1}, \ldots, x_{n}) \in V : x_j \in V_j  \right\}$ for all $i$. It follows that given $f \restriction_{D_n(\bar{a})}$, we can recover all $\bar{a}$-slices of $f$ of arity $\leq n-1$.
\end{remark}

The following is an illustration for $\PAC_2$ learning. The family of hypothesis $\mathcal{F}$ is given. An adversary picks some measures $\mu_1$ on $V_1$ and $\mu_2$ on $V_2$. Some points $a_{1,1}, \ldots, a_{N,1}$ are sampled  from $V_1$ with respect to $\mu_1$, and $a_{1,2}, \ldots, a_{N,2}$ are sampled from $V_2$ with respect to $\mu_2$.
  Then an adversary picks a set $f \subseteq V_1 \times V_2$ from $\mathcal{F}$, and we are given the (bounded by $2N$ number of) $1$-dimensional slices of $f$, in either direction, with the fixed coordinate coming from one of the  points $(a_{i,1}, a_{i,2})$ in our sample.  The learning function aims to recover, given this information, the set $f$ up to small symmetric difference with respect to the product measure $\mu_1 \otimes \mu_2$:

\tikzset{every picture/.style={line width=0.75pt}} 

\begin{tikzpicture}[x=0.75pt,y=0.75pt,yscale=-1,xscale=1]

\draw   (23.36,30.64) -- (132.38,30.64) -- (132.38,139.66) -- (23.36,139.66) -- cycle ;
\draw    (110.26,18.79) -- (108.06,45.24) ;
\draw [shift={(107.89,47.23)}, rotate = 274.76] [color={rgb, 255:red, 0; green, 0; blue, 0 }  ][line width=0.75]    (10.93,-3.29) .. controls (6.95,-1.4) and (3.31,-0.3) .. (0,0) .. controls (3.31,0.3) and (6.95,1.4) .. (10.93,3.29)   ;
\draw   (163.98,30.64) -- (272.99,30.64) -- (272.99,139.66) -- (163.98,139.66) -- cycle ;
\draw    (250.87,18.79) -- (248.67,45.24) ;
\draw [shift={(248.5,47.23)}, rotate = 274.76] [color={rgb, 255:red, 0; green, 0; blue, 0 }  ][line width=0.75]    (10.93,-3.29) .. controls (6.95,-1.4) and (3.31,-0.3) .. (0,0) .. controls (3.31,0.3) and (6.95,1.4) .. (10.93,3.29)   ;
\draw  [color={rgb, 255:red, 208; green, 2; blue, 27 }  ,draw opacity=1 ][fill={rgb, 255:red, 208; green, 2; blue, 27 }  ,fill opacity=1 ] (176.79,140.01) .. controls (176.79,138.8) and (177.77,137.82) .. (178.98,137.82) .. controls (180.19,137.82) and (181.17,138.8) .. (181.17,140.01) .. controls (181.17,141.22) and (180.19,142.2) .. (178.98,142.2) .. controls (177.77,142.2) and (176.79,141.22) .. (176.79,140.01) -- cycle ;
\draw  [color={rgb, 255:red, 208; green, 2; blue, 27 }  ,draw opacity=1 ][fill={rgb, 255:red, 208; green, 2; blue, 27 }  ,fill opacity=1 ] (193.38,140.4) .. controls (193.38,139.19) and (194.36,138.21) .. (195.57,138.21) .. controls (196.78,138.21) and (197.76,139.19) .. (197.76,140.4) .. controls (197.76,141.61) and (196.78,142.59) .. (195.57,142.59) .. controls (194.36,142.59) and (193.38,141.61) .. (193.38,140.4) -- cycle ;
\draw  [color={rgb, 255:red, 208; green, 2; blue, 27 }  ,draw opacity=1 ][fill={rgb, 255:red, 208; green, 2; blue, 27 }  ,fill opacity=1 ] (240.38,139.61) .. controls (240.38,138.4) and (241.36,137.42) .. (242.57,137.42) .. controls (243.78,137.42) and (244.76,138.4) .. (244.76,139.61) .. controls (244.76,140.82) and (243.78,141.8) .. (242.57,141.8) .. controls (241.36,141.8) and (240.38,140.82) .. (240.38,139.61) -- cycle ;
\draw  [color={rgb, 255:red, 208; green, 2; blue, 27 }  ,draw opacity=1 ][fill={rgb, 255:red, 208; green, 2; blue, 27 }  ,fill opacity=1 ] (161.78,125) .. controls (161.78,123.79) and (162.76,122.81) .. (163.97,122.81) .. controls (165.18,122.81) and (166.16,123.79) .. (166.16,125) .. controls (166.16,126.21) and (165.18,127.19) .. (163.97,127.19) .. controls (162.76,127.19) and (161.78,126.21) .. (161.78,125) -- cycle ;
\draw  [color={rgb, 255:red, 208; green, 2; blue, 27 }  ,draw opacity=1 ][fill={rgb, 255:red, 208; green, 2; blue, 27 }  ,fill opacity=1 ] (161.78,108.01) .. controls (161.78,106.8) and (162.76,105.83) .. (163.97,105.83) .. controls (165.18,105.83) and (166.16,106.8) .. (166.16,108.01) .. controls (166.16,109.22) and (165.18,110.2) .. (163.97,110.2) .. controls (162.76,110.2) and (161.78,109.22) .. (161.78,108.01) -- cycle ;
\draw  [color={rgb, 255:red, 208; green, 2; blue, 27 }  ,draw opacity=1 ][fill={rgb, 255:red, 208; green, 2; blue, 27 }  ,fill opacity=1 ] (161.78,57.85) .. controls (161.78,56.64) and (162.76,55.66) .. (163.97,55.66) .. controls (165.18,55.66) and (166.16,56.64) .. (166.16,57.85) .. controls (166.16,59.06) and (165.18,60.04) .. (163.97,60.04) .. controls (162.76,60.04) and (161.78,59.06) .. (161.78,57.85) -- cycle ;
\draw   (304.86,31.17) -- (413.87,31.17) -- (413.87,140.19) -- (304.86,140.19) -- cycle ;
\draw  [color={rgb, 255:red, 208; green, 2; blue, 27 }  ,draw opacity=1 ][fill={rgb, 255:red, 208; green, 2; blue, 27 }  ,fill opacity=1 ] (317.67,140.53) .. controls (317.67,139.33) and (318.65,138.35) .. (319.86,138.35) .. controls (321.07,138.35) and (322.05,139.33) .. (322.05,140.53) .. controls (322.05,141.74) and (321.07,142.72) .. (319.86,142.72) .. controls (318.65,142.72) and (317.67,141.74) .. (317.67,140.53) -- cycle ;
\draw  [color={rgb, 255:red, 208; green, 2; blue, 27 }  ,draw opacity=1 ][fill={rgb, 255:red, 208; green, 2; blue, 27 }  ,fill opacity=1 ] (334.26,140.93) .. controls (334.26,139.72) and (335.24,138.74) .. (336.45,138.74) .. controls (337.66,138.74) and (338.64,139.72) .. (338.64,140.93) .. controls (338.64,142.14) and (337.66,143.12) .. (336.45,143.12) .. controls (335.24,143.12) and (334.26,142.14) .. (334.26,140.93) -- cycle ;
\draw  [color={rgb, 255:red, 208; green, 2; blue, 27 }  ,draw opacity=1 ][fill={rgb, 255:red, 208; green, 2; blue, 27 }  ,fill opacity=1 ] (381.26,140.14) .. controls (381.26,138.93) and (382.24,137.95) .. (383.45,137.95) .. controls (384.66,137.95) and (385.64,138.93) .. (385.64,140.14) .. controls (385.64,141.35) and (384.66,142.33) .. (383.45,142.33) .. controls (382.24,142.33) and (381.26,141.35) .. (381.26,140.14) -- cycle ;
\draw  [color={rgb, 255:red, 208; green, 2; blue, 27 }  ,draw opacity=1 ][fill={rgb, 255:red, 208; green, 2; blue, 27 }  ,fill opacity=1 ] (302.66,125.52) .. controls (302.66,124.32) and (303.64,123.34) .. (304.85,123.34) .. controls (306.06,123.34) and (307.04,124.32) .. (307.04,125.52) .. controls (307.04,126.73) and (306.06,127.71) .. (304.85,127.71) .. controls (303.64,127.71) and (302.66,126.73) .. (302.66,125.52) -- cycle ;
\draw  [color={rgb, 255:red, 208; green, 2; blue, 27 }  ,draw opacity=1 ][fill={rgb, 255:red, 208; green, 2; blue, 27 }  ,fill opacity=1 ] (302.66,108.54) .. controls (302.66,107.33) and (303.64,106.35) .. (304.85,106.35) .. controls (306.06,106.35) and (307.04,107.33) .. (307.04,108.54) .. controls (307.04,109.75) and (306.06,110.73) .. (304.85,110.73) .. controls (303.64,110.73) and (302.66,109.75) .. (302.66,108.54) -- cycle ;
\draw  [color={rgb, 255:red, 208; green, 2; blue, 27 }  ,draw opacity=1 ][fill={rgb, 255:red, 208; green, 2; blue, 27 }  ,fill opacity=1 ] (302.66,58.38) .. controls (302.66,57.17) and (303.64,56.19) .. (304.85,56.19) .. controls (306.06,56.19) and (307.04,57.17) .. (307.04,58.38) .. controls (307.04,59.59) and (306.06,60.57) .. (304.85,60.57) .. controls (303.64,60.57) and (302.66,59.59) .. (302.66,58.38) -- cycle ;
\draw [color={rgb, 255:red, 155; green, 155; blue, 155 }  ,draw opacity=1 ]   (319.86,140.53) -- (319.86,30.91) ;
\draw [color={rgb, 255:red, 155; green, 155; blue, 155 }  ,draw opacity=1 ]   (336.45,140.93) -- (336.45,31.3) ;
\draw [color={rgb, 255:red, 155; green, 155; blue, 155 }  ,draw opacity=1 ]   (383.45,140.14) -- (383.45,30.51) ;
\draw [color={rgb, 255:red, 155; green, 155; blue, 155 }  ,draw opacity=1 ]   (414.4,58.38) -- (304.85,58.38) ;
\draw [color={rgb, 255:red, 155; green, 155; blue, 155 }  ,draw opacity=1 ]   (414.4,108.54) -- (304.85,108.54) ;
\draw [color={rgb, 255:red, 155; green, 155; blue, 155 }  ,draw opacity=1 ]   (414.4,125.52) -- (304.85,125.52) ;
\draw  [color={rgb, 255:red, 74; green, 144; blue, 226 }  ,draw opacity=1 ][fill={rgb, 255:red, 74; green, 144; blue, 226 }  ,fill opacity=0.17 ] (312.45,114.43) .. controls (307.83,94.97) and (325.08,66.33) .. (350.99,50.46) .. controls (376.89,34.58) and (401.65,37.48) .. (406.27,56.93) .. controls (410.9,76.39) and (393.65,105.03) .. (367.74,120.9) .. controls (341.83,136.78) and (317.08,133.88) .. (312.45,114.43) -- cycle ;
\draw [color={rgb, 255:red, 74; green, 144; blue, 226 }  ,draw opacity=1 ]   (373.71,21.16) -- (367.8,50.01) ;
\draw [shift={(367.39,51.97)}, rotate = 281.59] [color={rgb, 255:red, 74; green, 144; blue, 226 }  ,draw opacity=1 ][line width=0.75]    (10.93,-3.29) .. controls (6.95,-1.4) and (3.31,-0.3) .. (0,0) .. controls (3.31,0.3) and (6.95,1.4) .. (10.93,3.29)   ;
\draw   (26.42,180.74) -- (135.43,180.74) -- (135.43,289.75) -- (26.42,289.75) -- cycle ;
\draw  [color={rgb, 255:red, 208; green, 2; blue, 27 }  ,draw opacity=1 ][fill={rgb, 255:red, 208; green, 2; blue, 27 }  ,fill opacity=1 ] (39.23,290.1) .. controls (39.23,288.89) and (40.21,287.91) .. (41.42,287.91) .. controls (42.63,287.91) and (43.61,288.89) .. (43.61,290.1) .. controls (43.61,291.31) and (42.63,292.29) .. (41.42,292.29) .. controls (40.21,292.29) and (39.23,291.31) .. (39.23,290.1) -- cycle ;
\draw  [color={rgb, 255:red, 208; green, 2; blue, 27 }  ,draw opacity=1 ][fill={rgb, 255:red, 208; green, 2; blue, 27 }  ,fill opacity=1 ] (55.82,290.49) .. controls (55.82,289.28) and (56.8,288.3) .. (58.01,288.3) .. controls (59.22,288.3) and (60.2,289.28) .. (60.2,290.49) .. controls (60.2,291.7) and (59.22,292.68) .. (58.01,292.68) .. controls (56.8,292.68) and (55.82,291.7) .. (55.82,290.49) -- cycle ;
\draw  [color={rgb, 255:red, 208; green, 2; blue, 27 }  ,draw opacity=1 ][fill={rgb, 255:red, 208; green, 2; blue, 27 }  ,fill opacity=1 ] (102.82,289.7) .. controls (102.82,288.49) and (103.8,287.51) .. (105.01,287.51) .. controls (106.22,287.51) and (107.2,288.49) .. (107.2,289.7) .. controls (107.2,290.91) and (106.22,291.89) .. (105.01,291.89) .. controls (103.8,291.89) and (102.82,290.91) .. (102.82,289.7) -- cycle ;
\draw  [color={rgb, 255:red, 208; green, 2; blue, 27 }  ,draw opacity=1 ][fill={rgb, 255:red, 208; green, 2; blue, 27 }  ,fill opacity=1 ] (24.22,275.09) .. controls (24.22,273.88) and (25.2,272.9) .. (26.41,272.9) .. controls (27.62,272.9) and (28.6,273.88) .. (28.6,275.09) .. controls (28.6,276.3) and (27.62,277.28) .. (26.41,277.28) .. controls (25.2,277.28) and (24.22,276.3) .. (24.22,275.09) -- cycle ;
\draw  [color={rgb, 255:red, 208; green, 2; blue, 27 }  ,draw opacity=1 ][fill={rgb, 255:red, 208; green, 2; blue, 27 }  ,fill opacity=1 ] (24.22,258.1) .. controls (24.22,256.9) and (25.2,255.92) .. (26.41,255.92) .. controls (27.62,255.92) and (28.6,256.9) .. (28.6,258.1) .. controls (28.6,259.31) and (27.62,260.29) .. (26.41,260.29) .. controls (25.2,260.29) and (24.22,259.31) .. (24.22,258.1) -- cycle ;
\draw  [color={rgb, 255:red, 208; green, 2; blue, 27 }  ,draw opacity=1 ][fill={rgb, 255:red, 208; green, 2; blue, 27 }  ,fill opacity=1 ] (24.22,207.94) .. controls (24.22,206.73) and (25.2,205.75) .. (26.41,205.75) .. controls (27.62,205.75) and (28.6,206.73) .. (28.6,207.94) .. controls (28.6,209.15) and (27.62,210.13) .. (26.41,210.13) .. controls (25.2,210.13) and (24.22,209.15) .. (24.22,207.94) -- cycle ;
\draw [color={rgb, 255:red, 155; green, 155; blue, 155 }  ,draw opacity=0.61 ]   (41.42,290.1) -- (41.42,180.47) ;
\draw [color={rgb, 255:red, 155; green, 155; blue, 155 }  ,draw opacity=0.61 ]   (58.01,290.49) -- (58.01,180.87) ;
\draw [color={rgb, 255:red, 155; green, 155; blue, 155 }  ,draw opacity=0.61 ]   (105.01,289.7) -- (105.01,180.08) ;
\draw [color={rgb, 255:red, 155; green, 155; blue, 155 }  ,draw opacity=0.61 ]   (135.96,207.94) -- (26.41,207.94) ;
\draw [color={rgb, 255:red, 155; green, 155; blue, 155 }  ,draw opacity=0.61 ]   (135.96,275.09) -- (26.41,275.09) ;
\draw  [color={rgb, 255:red, 74; green, 144; blue, 226 }  ,draw opacity=1 ] (34.02,263.99) .. controls (29.39,244.54) and (46.64,215.9) .. (72.55,200.02) .. controls (98.45,184.14) and (123.21,187.04) .. (127.83,206.5) .. controls (132.46,225.95) and (115.21,254.59) .. (89.3,270.47) .. controls (63.4,286.34) and (38.64,283.44) .. (34.02,263.99) -- cycle ;
\draw   (165.19,182.84) -- (274.2,182.84) -- (274.2,291.86) -- (165.19,291.86) -- cycle ;
\draw  [color={rgb, 255:red, 208; green, 2; blue, 27 }  ,draw opacity=1 ][fill={rgb, 255:red, 208; green, 2; blue, 27 }  ,fill opacity=1 ] (178,292.2) .. controls (178,291) and (178.98,290.02) .. (180.19,290.02) .. controls (181.4,290.02) and (182.38,291) .. (182.38,292.2) .. controls (182.38,293.41) and (181.4,294.39) .. (180.19,294.39) .. controls (178.98,294.39) and (178,293.41) .. (178,292.2) -- cycle ;
\draw  [color={rgb, 255:red, 208; green, 2; blue, 27 }  ,draw opacity=1 ][fill={rgb, 255:red, 208; green, 2; blue, 27 }  ,fill opacity=1 ] (194.59,292.6) .. controls (194.59,291.39) and (195.57,290.41) .. (196.78,290.41) .. controls (197.99,290.41) and (198.97,291.39) .. (198.97,292.6) .. controls (198.97,293.81) and (197.99,294.79) .. (196.78,294.79) .. controls (195.57,294.79) and (194.59,293.81) .. (194.59,292.6) -- cycle ;
\draw  [color={rgb, 255:red, 208; green, 2; blue, 27 }  ,draw opacity=1 ][fill={rgb, 255:red, 208; green, 2; blue, 27 }  ,fill opacity=1 ] (241.59,291.81) .. controls (241.59,290.6) and (242.57,289.62) .. (243.78,289.62) .. controls (244.99,289.62) and (245.97,290.6) .. (245.97,291.81) .. controls (245.97,293.02) and (244.99,294) .. (243.78,294) .. controls (242.57,294) and (241.59,293.02) .. (241.59,291.81) -- cycle ;
\draw  [color={rgb, 255:red, 208; green, 2; blue, 27 }  ,draw opacity=1 ][fill={rgb, 255:red, 208; green, 2; blue, 27 }  ,fill opacity=1 ] (162.99,277.19) .. controls (162.99,275.99) and (163.97,275.01) .. (165.18,275.01) .. controls (166.39,275.01) and (167.37,275.99) .. (167.37,277.19) .. controls (167.37,278.4) and (166.39,279.38) .. (165.18,279.38) .. controls (163.97,279.38) and (162.99,278.4) .. (162.99,277.19) -- cycle ;
\draw  [color={rgb, 255:red, 208; green, 2; blue, 27 }  ,draw opacity=1 ][fill={rgb, 255:red, 208; green, 2; blue, 27 }  ,fill opacity=1 ] (162.99,260.21) .. controls (162.99,259) and (163.97,258.02) .. (165.18,258.02) .. controls (166.39,258.02) and (167.37,259) .. (167.37,260.21) .. controls (167.37,261.42) and (166.39,262.4) .. (165.18,262.4) .. controls (163.97,262.4) and (162.99,261.42) .. (162.99,260.21) -- cycle ;
\draw  [color={rgb, 255:red, 208; green, 2; blue, 27 }  ,draw opacity=1 ][fill={rgb, 255:red, 208; green, 2; blue, 27 }  ,fill opacity=1 ] (162.99,210.05) .. controls (162.99,208.84) and (163.97,207.86) .. (165.18,207.86) .. controls (166.39,207.86) and (167.37,208.84) .. (167.37,210.05) .. controls (167.37,211.26) and (166.39,212.24) .. (165.18,212.24) .. controls (163.97,212.24) and (162.99,211.26) .. (162.99,210.05) -- cycle ;
\draw [color={rgb, 255:red, 155; green, 155; blue, 155 }  ,draw opacity=0.61 ]   (180.19,292.2) -- (180.19,182.58) ;
\draw [color={rgb, 255:red, 155; green, 155; blue, 155 }  ,draw opacity=0.61 ]   (196.78,292.6) -- (196.78,182.97) ;
\draw [color={rgb, 255:red, 155; green, 155; blue, 155 }  ,draw opacity=0.61 ]   (243.78,291.81) -- (243.78,182.18) ;
\draw [color={rgb, 255:red, 155; green, 155; blue, 155 }  ,draw opacity=0.61 ]   (274.73,210.05) -- (165.18,210.05) ;
\draw [color={rgb, 255:red, 155; green, 155; blue, 155 }  ,draw opacity=0.61 ]   (274.73,260.21) -- (165.18,260.21) ;
\draw [color={rgb, 255:red, 155; green, 155; blue, 155 }  ,draw opacity=0.61 ]   (274.73,277.19) -- (165.18,277.19) ;
\draw [color={rgb, 255:red, 208; green, 2; blue, 27 }  ,draw opacity=1 ][line width=2.25]    (104.8,190.22) -- (104.8,258.63) ;
\draw [color={rgb, 255:red, 208; green, 2; blue, 27 }  ,draw opacity=1 ][line width=2.25]    (57.62,212.39) -- (57.62,282.25) ;
\draw [color={rgb, 255:red, 208; green, 2; blue, 27 }  ,draw opacity=1 ][line width=2.25]    (40.85,232.84) -- (40.85,277.01) ;
\draw [color={rgb, 255:red, 208; green, 2; blue, 27 }  ,draw opacity=1 ][line width=2.25]    (80.03,276.22) -- (40.85,277.01) ;
\draw [color={rgb, 255:red, 208; green, 2; blue, 27 }  ,draw opacity=1 ][line width=2.25]    (104.8,259.05) -- (33.64,259.05) ;
\draw [color={rgb, 255:red, 208; green, 2; blue, 27 }  ,draw opacity=1 ][line width=2.25]    (62.47,207.65) -- (128,207.65) ;
\draw [color={rgb, 255:red, 208; green, 2; blue, 27 }  ,draw opacity=1 ][line width=2.25]    (244.3,191.72) -- (244.3,260.13) ;
\draw [color={rgb, 255:red, 208; green, 2; blue, 27 }  ,draw opacity=1 ][line width=2.25]    (197.12,213.89) -- (197.12,283.75) ;
\draw [color={rgb, 255:red, 208; green, 2; blue, 27 }  ,draw opacity=1 ][line width=2.25]    (180.35,234.34) -- (180.35,278.51) ;
\draw [color={rgb, 255:red, 208; green, 2; blue, 27 }  ,draw opacity=1 ][line width=2.25]    (219.53,277.72) -- (180.35,278.51) ;
\draw [color={rgb, 255:red, 208; green, 2; blue, 27 }  ,draw opacity=1 ][line width=2.25]    (244.3,260.55) -- (173.14,260.55) ;
\draw [color={rgb, 255:red, 208; green, 2; blue, 27 }  ,draw opacity=1 ][line width=2.25]    (201.97,209.15) -- (267.5,209.15) ;
\draw [color={rgb, 255:red, 208; green, 2; blue, 27 }  ,draw opacity=1 ]   (219.48,177.5) -- (211.64,206.57) ;
\draw [shift={(211.12,208.5)}, rotate = 285.09] [color={rgb, 255:red, 208; green, 2; blue, 27 }  ,draw opacity=1 ][line width=0.75]    (10.93,-3.29) .. controls (6.95,-1.4) and (3.31,-0.3) .. (0,0) .. controls (3.31,0.3) and (6.95,1.4) .. (10.93,3.29)   ;
\draw [color={rgb, 255:red, 208; green, 2; blue, 27 }  ,draw opacity=1 ]   (305.25,274.5) -- (245.96,249.77) ;
\draw [shift={(244.12,249)}, rotate = 22.64] [color={rgb, 255:red, 208; green, 2; blue, 27 }  ,draw opacity=1 ][line width=0.75]    (10.93,-3.29) .. controls (6.95,-1.4) and (3.31,-0.3) .. (0,0) .. controls (3.31,0.3) and (6.95,1.4) .. (10.93,3.29)   ;

\draw (116.24,141.33) node [anchor=north west][inner sep=0.75pt]    {$V_{1}$};
\draw (2.88,29.15) node [anchor=north west][inner sep=0.75pt]    {$V_{2}$};
\draw (80.26,-0.07) node [anchor=north west][inner sep=0.75pt]    {$V_{1} \times V_{2}$};
\draw (264.25,141.54) node [anchor=north west][inner sep=0.75pt]    {$V_{1}$};
\draw (141.5,27.15) node [anchor=north west][inner sep=0.75pt]    {$V_{2}$};
\draw (220.87,-0.07) node [anchor=north west][inner sep=0.75pt]    {$V_{1} \times V_{2}$};
\draw (168.65,144.17) node [anchor=north west][inner sep=0.75pt]  [font=\footnotesize]  {$\textcolor[rgb]{0.82,0.01,0.11}{a}\textcolor[rgb]{0.82,0.01,0.11}{_{1,1}}$};
\draw (189.6,145.12) node [anchor=north west][inner sep=0.75pt]  [font=\footnotesize]  {$\textcolor[rgb]{0.82,0.01,0.11}{a}\textcolor[rgb]{0.82,0.01,0.11}{_{2,1}}$};
\draw (209.04,143.01) node [anchor=north west][inner sep=0.75pt]    {$\textcolor[rgb]{0.82,0.01,0.11}{\dotsc }$};
\draw (237.44,144.91) node [anchor=north west][inner sep=0.75pt]  [font=\footnotesize]  {$\textcolor[rgb]{0.82,0.01,0.11}{a}\textcolor[rgb]{0.82,0.01,0.11}{_{N,1}}$};
\draw (137.05,117.29) node [anchor=north west][inner sep=0.75pt]  [font=\footnotesize]  {$\textcolor[rgb]{0.82,0.01,0.11}{a}\textcolor[rgb]{0.82,0.01,0.11}{_{1,2}}$};
\draw (137.84,101.49) node [anchor=north west][inner sep=0.75pt]  [font=\footnotesize]  {$\textcolor[rgb]{0.82,0.01,0.11}{a}\textcolor[rgb]{0.82,0.01,0.11}{_{2,2}}$};
\draw (142.68,69.92) node [anchor=north west][inner sep=0.75pt]    {$\textcolor[rgb]{0.82,0.01,0.11}{\vdots }$};
\draw (136.73,50.22) node [anchor=north west][inner sep=0.75pt]  [font=\footnotesize]  {$\textcolor[rgb]{0.82,0.01,0.11}{a}\textcolor[rgb]{0.82,0.01,0.11}{_{N,2}}$};
\draw (403.13,141.07) node [anchor=north west][inner sep=0.75pt]    {$V_{1}$};
\draw (282.38,26.68) node [anchor=north west][inner sep=0.75pt]    {$V_{2}$};
\draw (352.9,6.06) node [anchor=north west][inner sep=0.75pt]  [color={rgb, 255:red, 74; green, 144; blue, 226 }  ,opacity=1 ]  {$f\in \mathcal{F}$};
\draw (265.57,292.74) node [anchor=north west][inner sep=0.75pt]    {$V_{1}$};
\draw (143.92,177.4) node [anchor=north west][inner sep=0.75pt]    {$V_{2}$};
\draw (279.58,193.24) node [anchor=north west][inner sep=0.75pt]  [font=\scriptsize]  {$ \begin{array}{l}
\text{Need to (approximately) recover }\\
\textcolor[rgb]{0.29,0.56,0.89}{f}\text{ given these slices for the sample} \ \\
\textcolor[rgb]{0.82,0.01,0.11}{a}\textcolor[rgb]{0.82,0.01,0.11}{_{1,1}}\textcolor[rgb]{0.82,0.01,0.11}{,\dotsc ,a}\textcolor[rgb]{0.82,0.01,0.11}{_{N,1}}\textcolor[rgb]{0.82,0.01,0.11}{;a}\textcolor[rgb]{0.82,0.01,0.11}{_{1,2}}\textcolor[rgb]{0.82,0.01,0.11}{,\dotsc ,a}\textcolor[rgb]{0.82,0.01,0.11}{_{N,2}{}}\textcolor[rgb]{0.82,0.01,0.11}{}
\end{array}$};
\draw (145.16,222.27) node [anchor=north west][inner sep=0.75pt]    {$\textcolor[rgb]{0.82,0.01,0.11}{\vdots }$};
\draw (168.75,297) node [anchor=north west][inner sep=0.75pt]  [font=\footnotesize]  {$\textcolor[rgb]{0.82,0.01,0.11}{a}\textcolor[rgb]{0.82,0.01,0.11}{_{1,1}}$};
\draw (191.2,297.45) node [anchor=north west][inner sep=0.75pt]  [font=\footnotesize]  {$\textcolor[rgb]{0.82,0.01,0.11}{a}\textcolor[rgb]{0.82,0.01,0.11}{_{2,1}}$};
\draw (212.65,298.34) node [anchor=north west][inner sep=0.75pt]    {$\textcolor[rgb]{0.82,0.01,0.11}{\dotsc }$};
\draw (241.05,298.24) node [anchor=north west][inner sep=0.75pt]  [font=\footnotesize]  {$\textcolor[rgb]{0.82,0.01,0.11}{a}\textcolor[rgb]{0.82,0.01,0.11}{_{N,1}}$};
\draw (137.65,267.61) node [anchor=north west][inner sep=0.75pt]  [font=\footnotesize]  {$\textcolor[rgb]{0.82,0.01,0.11}{a}\textcolor[rgb]{0.82,0.01,0.11}{_{1,2}}$};
\draw (138.05,251.82) node [anchor=north west][inner sep=0.75pt]  [font=\footnotesize]  {$\textcolor[rgb]{0.82,0.01,0.11}{a}\textcolor[rgb]{0.82,0.01,0.11}{_{2,2}}$};
\draw (136.55,201.26) node [anchor=north west][inner sep=0.75pt]  [font=\footnotesize]  {$\textcolor[rgb]{0.82,0.01,0.11}{a}\textcolor[rgb]{0.82,0.01,0.11}{_{N,2}}$};
\draw (308.65,143.67) node [anchor=north west][inner sep=0.75pt]  [font=\footnotesize]  {$\textcolor[rgb]{0.82,0.01,0.11}{a}\textcolor[rgb]{0.82,0.01,0.11}{_{1,1}}$};
\draw (329.6,144.62) node [anchor=north west][inner sep=0.75pt]  [font=\footnotesize]  {$\textcolor[rgb]{0.82,0.01,0.11}{a}\textcolor[rgb]{0.82,0.01,0.11}{_{2,1}}$};
\draw (349.04,142.51) node [anchor=north west][inner sep=0.75pt]    {$\textcolor[rgb]{0.82,0.01,0.11}{\dotsc }$};
\draw (377.44,144.41) node [anchor=north west][inner sep=0.75pt]  [font=\footnotesize]  {$\textcolor[rgb]{0.82,0.01,0.11}{a}\textcolor[rgb]{0.82,0.01,0.11}{_{N,1}}$};
\draw (277.05,116.79) node [anchor=north west][inner sep=0.75pt]  [font=\footnotesize]  {$\textcolor[rgb]{0.82,0.01,0.11}{a}\textcolor[rgb]{0.82,0.01,0.11}{_{1,2}}$};
\draw (277.84,100.99) node [anchor=north west][inner sep=0.75pt]  [font=\footnotesize]  {$\textcolor[rgb]{0.82,0.01,0.11}{a}\textcolor[rgb]{0.82,0.01,0.11}{_{2,2}}$};
\draw (282.68,69.42) node [anchor=north west][inner sep=0.75pt]    {$\textcolor[rgb]{0.82,0.01,0.11}{\vdots }$};
\draw (276.73,49.22) node [anchor=north west][inner sep=0.75pt]  [font=\footnotesize]  {$\textcolor[rgb]{0.82,0.01,0.11}{a}\textcolor[rgb]{0.82,0.01,0.11}{_{N,2}}$};
\draw (33.15,295.1) node [anchor=north west][inner sep=0.75pt]  [font=\footnotesize]  {$\textcolor[rgb]{0.82,0.01,0.11}{a}\textcolor[rgb]{0.82,0.01,0.11}{_{1,1}}$};
\draw (54.1,296.05) node [anchor=north west][inner sep=0.75pt]  [font=\footnotesize]  {$\textcolor[rgb]{0.82,0.01,0.11}{a}\textcolor[rgb]{0.82,0.01,0.11}{_{2,1}}$};
\draw (75.54,296.94) node [anchor=north west][inner sep=0.75pt]    {$\textcolor[rgb]{0.82,0.01,0.11}{\dotsc }$};
\draw (101.94,295.84) node [anchor=north west][inner sep=0.75pt]  [font=\footnotesize]  {$\textcolor[rgb]{0.82,0.01,0.11}{a}\textcolor[rgb]{0.82,0.01,0.11}{_{N,1}}$};
\draw (-0.45,266.22) node [anchor=north west][inner sep=0.75pt]  [font=\footnotesize]  {$\textcolor[rgb]{0.82,0.01,0.11}{a}\textcolor[rgb]{0.82,0.01,0.11}{_{1,2}}$};
\draw (0.34,250.42) node [anchor=north west][inner sep=0.75pt]  [font=\footnotesize]  {$\textcolor[rgb]{0.82,0.01,0.11}{a}\textcolor[rgb]{0.82,0.01,0.11}{_{2,2}}$};
\draw (5.18,216.85) node [anchor=north west][inner sep=0.75pt]    {$\textcolor[rgb]{0.82,0.01,0.11}{\vdots }$};
\draw (-0.77,198.65) node [anchor=north west][inner sep=0.75pt]  [font=\footnotesize]  {$\textcolor[rgb]{0.82,0.01,0.11}{a}\textcolor[rgb]{0.82,0.01,0.11}{_{N,2}}$};
\draw (186.96,164.32) node [anchor=north west][inner sep=0.75pt]  [font=\scriptsize,color={rgb, 255:red, 208; green, 2; blue, 27 }  ,opacity=1 ]  {$f_{a_{N,2}} -\text{slice of }\textcolor[rgb]{0.29,0.56,0.89}{f}\text{ at } a_{N,2}$};
\draw (286.46,277.32) node [anchor=north west][inner sep=0.75pt]  [font=\scriptsize,color={rgb, 255:red, 208; green, 2; blue, 27 }  ,opacity=1 ]  {$f_{a_{N,1}} -\text{slice of }\textcolor[rgb]{0.29,0.56,0.89}{f}\text{ at } a_{N,1}$};

\end{tikzpicture}

\begin{fac}\cite[Theorem 5]{KotaPAC}\label{fac: Kota one dir} Every $\PAC_n$-learnable class  has finite $\VC_n$-dimension. 
\end{fac}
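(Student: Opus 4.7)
The plan is to prove the contrapositive via a higher-arity ``no free lunch'' argument: assuming $\VC_n(\mathcal{F}) = \infty$, I exhibit, for any putative learning function $H$, a product measure and a hypothesis $f \in \mathcal{F}$ on which $H$ fails with probability exceeding the promised $\delta$.

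Concretely, suppose $H : \mathcal{F}^n_{\fin} \to \mathcal{F}$ has sample complexity $N_{\varepsilon,\delta}$, fix $\varepsilon := \delta := 2^{-(n+3)}$ and $N := N_{\varepsilon,\delta}$, and use $\VC_n(\mathcal{F}) = \infty$ to pick a $d$-box $A = A_1 \ttimes A_n$ shattered by $\mathcal{F}$ with $d \geq 2N$. For each $g \subseteq A$ fix (by shattering) some $f_g \in \mathcal{F}$ with $f_g \cap A = g$, and let $\nu_i$ be the uniform probability measure on $A_i$, so that $\nu := \nu_1 \otimes \cdots \otimes \nu_n$ is concentrated on $A$. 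For $\vec{a} \in A^N$ set $U_{\vec{a}} := \prod_{i=1}^n (A_i \setminus \{a_{1,i}, \ldots, a_{N,i}\})$; then $A \cap D_n(\vec{a}) = A \setminus U_{\vec{a}}$, and $d \geq 2N$ forces $\nu(U_{\vec{a}}) \geq 2^{-n}$ deterministically. Now draw $g$ uniformly from $2^A$. The key step is the independence claim that, conditional on the learner's input $f_g \restriction_{D_n(\vec{a})}$, the random bits $\{g(b) : b \in U_{\vec{a}}\}$ are uniform independent $\{0,1\}$-valued. Granting this, for each $b \in U_{\vec{a}}$ one has $\Pr_g[H(f_g \restriction_{D_n(\vec{a})})(b) \neq g(b)] = \tfrac{1}{2}$, so
\[ \E_{g, \vec{a}} \bigl[ \nu\bigl( H(f_g \restriction_{D_n(\vec{a})}) \Delta f_g \bigr) \bigr] \;\geq\; \E_{\vec{a}} \bigl[ \tfrac{1}{2} \nu(U_{\vec{a}}) \bigr] \;\geq\; 2^{-(n+1)} . \]
Averaging over $g$ then extracts a specific $g^* \in 2^A$ with $\E_{\vec{a}}[\nu(H(f_{g^*} \restriction_{D_n(\vec{a})}) \Delta f_{g^*})] \geq 2^{-(n+1)}$, and applying the bound $\E[X] \leq \varepsilon + \Pr[X > \varepsilon]$ to $X := \nu(H(f_{g^*} \restriction_{D_n(\vec{a})}) \Delta f_{g^*}) \in [0,1]$ yields $\Pr_{\vec{a}}[X > \varepsilon] \geq 2^{-(n+1)} - 2^{-(n+3)} > \delta$, contradicting the $\PAC_n$ guarantee for $H$ at $(\varepsilon, \delta, m = N)$.

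The main obstacle is the independence claim itself: the values of $f_g$ on the ``outer cross'' $D_n(\vec{a}) \setminus A$ are in general determined by the adversary's arbitrary extension choice $g \mapsto f_g$, and could a priori encode information about $g \restriction_{U_{\vec{a}}}$ that the learner sees. In the finite setting this obstruction is cleanly resolved by reducing to the self-contained problem of learning $2^A$ on the box $A_1 \ttimes A_n$ itself: since $\nu$ is supported on $A$ and shattering gives $\{f \cap A : f \in \mathcal{F}\} = 2^A$, the original $\PAC_n$-guarantee for $H$ transfers with the same rate to a $\PAC_n$-guarantee for the induced learner on $2^A$ on the ambient space $A$ (via the identification $\nu(H(f_g \restriction_{D_n(\vec{a})}) \Delta f_g) = \nu_A(H(f_g \restriction_{D_n(\vec{a})}) \cap A \Delta g)$), where $D_n(\vec{a}) \setminus A = \emptyset$ and the independence claim is automatic; the general measurable case requires a more careful randomized-extension argument. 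Once the independence step is in hand the remainder is purely the finite counting-plus-Markov calculation above, and the $2^{-n}$ factor in the rate reflects the measure of the sub-box left uncovered by the cross-shaped observation region $D_n(\vec{a})$.
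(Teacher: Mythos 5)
The paper does not prove this statement --- it is imported verbatim as a Fact from \cite{KotaPAC} --- so there is no internal proof to compare against; I can only assess your argument on its own terms. The overall shape (contrapositive, uniform measures concentrated on a large shattered box, a uniformly random target $g \subseteq A$, the bound $\nu(U_{\vec{a}}) \geq 2^{-n}$, and the concluding Markov step) is the natural one and those pieces are fine. The problem is exactly the step you flag as ``the main obstacle'': the independence claim is genuinely false for some families of infinite $\VC_n$-dimension, and your proposed resolution by ``reducing to the box'' does not restore it. The map $g \mapsto H(f_g \restriction_{D_n(\vec{a})}) \cap A$ is \emph{not} a learning function for the class $2^A$ on the ambient space $A$, because it is not a function of $g \restriction_{A \cap D_n(\vec{a})}$ alone: it factors through the extension $f_g$, whose choice can depend on all of $g$, and through the values of $f_g$ on $D_n(\vec{a}) \setminus A$, which is nonempty for $n \geq 2$ since the observed slices run through all of $V_i$, not just $A_i$. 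So the $\PAC_n$ guarantee for $H$ only yields a guarantee for a ``cheating'' learner on the box, to which no-free-lunch does not apply; conversely, a legitimate learner on the box cannot simulate $H$, since it cannot supply the missing off-box values. (For $n=1$ the issue is invisible because $D_1(\vec{a}) \subseteq A$ automatically; the leakage is precisely the new higher-arity phenomenon your proof has to control and does not.)

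The gap is not merely formal. For $n=2$, take pairwise disjoint boxes $A^{(d)} = A_1^{(d)} \times A_2^{(d)}$ of size $d$, a set $B^{(d)} \subseteq V_1$ disjoint from all the $A_1^{(d')}$, an injection $c$ from $2^{A^{(d)}}$ into $B^{(d)}$, and let $\mathcal{F}$ consist of the sets $f_g := g \cup \bigl( \{c(g)\} \times A_2^{(d)} \bigr)$ for all $d$ and all $g \subseteq A^{(d)}$. Every $A^{(d)}$ is shattered, so $\VC_2(\mathcal{F}) = \infty$; yet against your adversary (uniform measures on $A^{(d)}$, uniform random $g$) a learner reads $c(g)$ off any single row through a point of $A_2^{(d)}$ --- that row is $g_{a_2} \cup \{c(g)\}$ and $c(g)$ is its unique element outside $A_1^{(d)}$ --- decodes $g$ exactly, and outputs $f_g$ with error $0$. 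Hence $\E_{g,\vec{a}}\bigl[\nu\bigl(H(f_g\restriction_{D_2(\vec{a})}) \Delta f_g\bigr)\bigr] = 0$, not $\geq 2^{-(n+1)}$. The theorem is of course still true for this family, but certifying that requires a different adversary (for instance one using targets all of whose observed slices are empty while the learner's forced output has large measure), not the box measure with a random trace. To repair the proof you need an argument that bounds, or circumvents, the information about $g\restriction_{U_{\vec{a}}}$ transmitted through $D_n(\vec{a}) \setminus A$; as written, that step fails.
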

\noindent The converse implication remained open.

\section{Packing lemma for families of finite VC$_k$ dimension}

The following is a classical \emph{packing lemma} of Haussler:
\begin{fac}\cite{haussler1995sphere} 
	For every $d$ and $\varepsilon >0$ there is $N = N(d,\varepsilon)$ so that: 
 if $\left(V,\mu\right)$ is a probability
space and $\mathcal{F}$ is a family of measurable subsets of $V$
with $VC\left(\mathcal{F}\right)\leq d$, then there are some $S_1, \ldots, S_N \in \mathcal{F}$ so that  every $S \in \mathcal{F}$ satisfies $\mu \left( S \Delta S_i \right) \leq \varepsilon$ for some $i$.
\end{fac}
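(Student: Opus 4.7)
The plan is to follow the standard probabilistic proof that combines a greedy $\varepsilon$-packing with the Sauer--Shelah lemma.

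First, I would let $S_1, \ldots, S_N \in \mathcal{F}$ be a maximal family of sets that are pairwise $\varepsilon$-separated in the pseudometric $d_\mu(A,B) := \mu(A \Delta B)$, i.e.\ $\mu(S_i \Delta S_j) > \varepsilon$ for all $i \neq j$. By maximality, every $S \in \mathcal{F}$ must satisfy $\mu(S \Delta S_i) \leq \varepsilon$ for some $i$, so such a family automatically witnesses the desired covering. The task thus reduces to bounding $N$ by a function of $d$ and $\varepsilon$ alone.

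To bound $N$, sample $\bar{a} = (a_1, \ldots, a_m)$ i.i.d.\ from $\mu$ for some $m$ to be chosen. For any two distinct $i, j$, each coordinate $a_\ell$ independently falls in $S_i \Delta S_j$ with probability $> \varepsilon$, so $\Pr[S_i \cap \bar{a} = S_j \cap \bar{a}] \leq (1-\varepsilon)^m \leq e^{-\varepsilon m}$, giving that the expected number of colliding pairs is at most $\binom{N}{2} e^{-\varepsilon m}$. On the other hand, if $T(\bar a)$ denotes the number of distinct traces $\{S_i \cap \bar a : i \leq N\}$, partitioning the $N$ indices by their trace into classes of sizes $n_1, \ldots, n_{T(\bar a)}$ with $\sum_t n_t = N$ shows that the number of colliding pairs equals $\sum_t \binom{n_t}{2} \geq \sum_t (n_t - 1) = N - T(\bar a)$. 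By the Sauer--Shelah lemma applied in the one-dimensional setting, $T(\bar a) \leq \binom{m}{\leq d} \leq (em/d)^d$, so taking expectations yields $N - (em/d)^d \leq \binom{N}{2} e^{-\varepsilon m}$.

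Finally I would close the estimate by choosing $m$ of order $(\log N)/\varepsilon$, which forces $\binom{N}{2} e^{-\varepsilon m} \leq N/2$ and hence $N/2 \leq (em/d)^d$. This is an inequality of the form $N \leq C(d)(\log N / \varepsilon)^d$, which has only bounded solutions in $N$ and therefore produces an explicit bound $N \leq N(d, \varepsilon)$ depending only on $d$ and $\varepsilon$.

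The main obstacle, and the reason Haussler's paper is cited rather than this sketch, is quantitative sharpness: the naive balancing above yields a suboptimal bound of order $(1/\varepsilon)^d \log^d(1/\varepsilon)$, whereas the optimal polynomial bound $N = O((1/\varepsilon)^d)$ requires a more delicate double-sampling/swapping argument that controls the variance, not merely the mean, of the collision count. For the qualitative statement (existence of some finite $N(d, \varepsilon)$), however, the first-moment argument with Sauer--Shelah sketched above already suffices.
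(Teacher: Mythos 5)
The paper does not prove this statement; it is quoted as a Fact with a citation to Haussler, so there is no in-paper argument to compare against. Your first-moment argument is correct and does establish the qualitative claim as stated (some finite $N(d,\varepsilon)$): the reduction to bounding a maximal $\varepsilon$-separated family, the collision-probability estimate $\Pr[S_i\cap\bar a=S_j\cap\bar a]\le(1-\varepsilon)^m$, the count of colliding pairs being at least $N-T(\bar a)$, the Sauer--Shelah bound on $T(\bar a)$, and the final balancing of $m$ all check out, and you correctly flag that Haussler's sharp $O(\varepsilon^{-d})$ bound needs the more delicate chaining/double-sampling argument, which is not required for the statement as quoted. One small point of hygiene: a maximal $\varepsilon$-separated family should be justified to exist and be finite before you invoke it; this is immediate since your bound applies to every \emph{finite} $\varepsilon$-separated subfamily, so all such families have size at most $N(d,\varepsilon)$ and a maximal one exists, but the order of quantifiers is worth stating explicitly.
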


In other words, there is a bounded (in terms of $d$ and $\varepsilon$) number of sets in $\mathcal{F}$ so that \emph{every} set in $\mathcal{F}$ is $\varepsilon$-close to one of them. One of the main results of \cite{chernikov2020hypergraph} is a higher arity generalization of Haussler's packing lemma from $\VC$-dimension to $\VC_k$-dimension. E.g.~for $k=2$, given a family $\mathcal{F}$ of subsets of $V_1 \times V_2$ of finite $\VC_2$-dimension, we can no longer expect that every set in the family is $\varepsilon$-close (with respect to the product measure $\mu_1 \otimes \mu_2$) to one of a bounded number of sets from $\mathcal{F}$. What we get instead is that there is some $N = N(d,\varepsilon)$ and sets $S_1, \ldots, S_N \in \mathcal{F}$, so that for every $S \in \mathcal{F}$ we have $\mu_1 \otimes \mu_2(S \triangle D) \leq \varepsilon$ for some $D$ given by a Boolean combination of $S_1, \ldots, S_N$ and at most $N$ cylinders over smaller arity slices of the form $S_{b_i}\times V_2$ or $(S_i)_{b_i} \times V_2$ (and $V_1 \times S_{a_i}$ or $V_1 \times (S_i)_{a_i}$)   for some $a_i \in V_1, b_i \in V_2$ that may vary with $S$. We state it now for general $k$:

\begin{definition}
	Given $S \subseteq V_1 \ttimes V_k$, $u \subseteq [k] = \{1, \ldots, k\}$ and $\bar{a} = (a_i : i \in u) \in \prod_{i \in u} V_i$, we let $S'_{\bar{a}} := \{(v_i : i \in [k]) \in S : \bigwedge_{i \in u} v_i = a_i  \}$ and $S_{\bar{a}} := \pi_{[k] \setminus u}(S'_{\bar{a}})$ --- so $S_{\bar{a}} \subseteq \prod_{i \in [k] \setminus u} V_i$ is the $(k-|u|)$-ary fiber of $S$ at $\bar{a}$.
\end{definition}

\begin{fac}\label{fac: finite VCk-dim implies approx bounded}\cite[Proposition 5.5]{chernikov2020hypergraph}
For any $k, d$ and $\varepsilon>0$ there is $N = N(k,d, \varepsilon)$ satisfying the following.
	Let $(V_i, \mu_i)$ be probability spaces and $\mathcal{F}$ a family of subsets of $V_1 \ttimes V_k$ with $\VC_k(\mathcal{F}) \leq d$. Then there exist $S_1, \ldots, S_N \in \mathcal{F}$ such that for every $S \in \mathcal{F}$ we have $\mu_1 \otimes \ldots \otimes \mu_k (S \Delta D) \leq \varepsilon$ for some $D$ given by a Boolean combination of $S_1, \ldots, S_N$ and $\leq N$ sets given by $\leq (k-1)$-ary fibers of $S, S_1, \ldots, S_N$, i.e.~sets of the form 
	$$\{(v_1, \ldots, v_k) \in V_1 \ttimes V_k : (v_i : i \in [k] \setminus u) \in  S_{\bar{a}}\}$$
	 for some $u \subseteq [k], |u| \geq 1$ and $\bar{a} \in \prod_{i \in u}V_i$ (which may depend on $S$).

 \end{fac}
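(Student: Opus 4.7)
My plan is to generalize Haussler's original argument for the packing lemma to the higher-arity setting, proceeding by induction on $k$ and relying crucially on the VC$_k$-Sauer-Shelah bound (Fact \ref{fac: n-dep Sauer-Shelah}). The base case $k=1$ is Haussler's theorem itself. For $k \geq 2$, assume toward contradiction that no such $N$ exists. Then by a greedy procedure we produce, for any target length $M$, a sequence $S_1, \ldots, S_M \in \mathcal{F}$ such that for every $n<M$ we have $\mu(S_{n+1} \Delta D) > \varepsilon$ for every admissible approximator $D$, i.e., every Boolean combination of $\{S_1, \ldots, S_n\}$ together with at most $n$ cylinders over $\leq(k-1)$-ary fibers of $\{S_1, \ldots, S_{n+1}\}$.

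The core step is a random sample box. Pick $\bar a_i \sim \mu_i^{\otimes m}$ for $m$ large and let $A = A_1 \times \ldots \times A_k$. Standard uniform convergence (deduced from Fact \ref{fac: n-dep Sauer-Shelah} by Chernoff/symmetrization, applied both to $\mathcal{F}$ itself and to the classes of lower-arity fiber-cylinders) shows that with positive probability the normalized Hamming distances on $A$ approximate $\mu$-distances, uniformly over all $S_j$'s, their fibers, and all admissible Boolean combinations, to within $\varepsilon/10$. The greedy-$\varepsilon$-far property therefore transfers to the Hamming metric on $A$, and the problem reduces to a combinatorial counting question on a set of size $m^k$. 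By Fact \ref{fac: n-dep Sauer-Shelah}, the number of distinct traces of $\mathcal{F}$ on $A$ is at most $2^{m^{k-\varepsilon'}}$ for some $\varepsilon'=\varepsilon'(d)>0$, i.e., sub-exponential in $m^k$. One must then show that if $M$ is large compared to a suitable function of $m, d, k, \varepsilon$, the pairwise-far-modulo-approximator constraints force a $(d+1)$-box shattered by the sequence, contradicting $\VC_k(\mathcal{F})\leq d$.

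The main obstacle is controlling the lower-arity fibers, and I expect this to be the technical heart of the argument. Finite $\VC_k(\mathcal{F})$ does \emph{not} in general imply finite $\VC_{k-1}$-dimension on the class of fibers of $\mathcal{F}$ --- for instance, the family $\{A \times B : A \subseteq V_1, B \subseteq V_2\} \subseteq V_1\times V_2$ has $\VC_2 = 1$ but its $1$-ary fibers have unbounded $\VC_1$ --- so one cannot na\"ively apply the induction hypothesis to the fiber class itself. Instead, one must exploit the fact that on the random sample, each fiber-trace of $\mathcal{F}$ arises by fixing a coordinate of a full trace, so the number of distinct fiber-traces realized on $A$ is still sub-exponential in $m^k$. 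The argument must carefully interleave the greedy selection (which accumulates the library $S_1, \ldots, S_n$) with the allowed $n$ fiber-cylinder slots (which may depend on $S$), showing inductively that the effective approximator class on $A$ is small enough that the greedy sequence cannot grow past $N(k,d,\varepsilon)$ without either closing up or directly producing a shattered $(d+1)$-box via a pigeonhole on the fiber-cylinders.
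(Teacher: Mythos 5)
First, note that this statement is quoted as a Fact from \cite[Proposition 5.5]{chernikov2020hypergraph} and is not proved in the present note; the proof in the cited paper is an infinitary one, carried out in (ultraproducts of) graded probability spaces, where the subexponential trace count from Fact \ref{fac: n-dep Sauer-Shelah} is converted into approximate measurability of $\mathbf{1}_S$ with respect to a $\sigma$-algebra generated by finitely many members of $\mathcal{F}$ and lower-arity cylinder sets. Your plan of a finitary Haussler-style greedy argument is therefore architecturally different, and as sketched it has a genuine gap at its core. The counting step that closes Haussler's argument for $k=1$ provably does not close it for $k\geq 2$: the greedy sequence $S_1,\ldots,S_M$ is pairwise $\varepsilon$-far, so on a random box of side $m\asymp \varepsilon^{-2}\log M$ the traces are distinct (even pairwise Hamming-far), giving $M\leq 2^{m^{k-\varepsilon'}}$; but for $k\geq 2$ this reads $M\leq 2^{(C\log M)^{k-\varepsilon'}}$, which is no contradiction for large $M$ (for $k=1$ the trace count is polynomial in $m$, which is why $M\leq \mathrm{poly}(\log M)$ bounds $M$ there). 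More fundamentally, a bound on the \emph{number} of traces cannot by itself yield the packing conclusion: $2^{m^{k-\varepsilon'}}$ subsets of a set of size $m^k$ can easily be pairwise Hamming-far (a code), so the implication ``subexponentially many traces $\Rightarrow$ each trace is approximable by a bounded library plus lower-arity fiber data'' --- which you defer with ``one must then show'' --- is exactly the content of the theorem and requires using the box-shattering structure of $\VC_k$ directly, not just the cardinality bound.

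A second, independent problem is the transfer step. You need Hamming distances on the random box to approximate $\mu$-distances \emph{uniformly over all admissible approximators $D$}, since $D$ is selected after the sample is drawn. But the admissible $D$ range over fibers $S_{\bar a}$ at arbitrary base points $\bar a\in\prod_{i\in u}V_i$, and the class of such fiber-cylinders of even a single $S$ can have infinite $\VC$-dimension (e.g.\ $V_1$ the finite subsets of $V_2$ and $S=\{(a,b):b\in a\}$ has $\VC_2(\{S\})=0$ while $\{S_a:a\in V_1\}$ shatters arbitrarily large subsets of $V_2$) --- this is essentially the same phenomenon you correctly flag when noting that fibers do not inherit bounded $\VC_{k-1}$. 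So uniform convergence over that class is simply false in general, and Chernoff for a fixed $D$ does not suffice. The cited proof avoids this by forcing the fiber base points to come from the sample itself (equivalently, from a positive-measure set of tuples, as in Fact \ref{fac: packing with pos measure}), via Fubini in the graded-probability-space setting. In summary: your proposal correctly identifies the key input (Fact \ref{fac: n-dep Sauer-Shelah}) and the key obstruction (fibers are not tame as a class), but both the combinatorial heart of the argument and the measure-theoretic transfer are asserted rather than supplied, and the specific quantitative route indicated cannot work as stated.
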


\begin{remark}
	Note that for $k = 1$, the only $0$-ary fibers of $S$ are $\emptyset$ and $V$, so we (qualitatively) recover the classical Haussler's packing lemma.  Our result in \cite[Proposition 5.5]{chernikov2020hypergraph} is in fact more general, for families of $[0,1]$-valued functions instead of just $\{0,1\}$-valued functions. 
\end{remark}

%
%

\section{Equivalence of finite $\VC_k$-dimension to $\PAC_k$-learning}\label{sec: VCk PACk}

We will use the weak law of large numbers, in the following simple form:
\begin{fac}\label{fac: weak law of large numbers}
	For every $\varepsilon, \delta \in \mathbb{R}_{>0}$ and $k \in \mathbb{N}_{\geq 1}$ there exists $N = N(\varepsilon, \delta, k)$ satisfying the following. For any probability space $(\Omega, \mathcal{B}, \mu)$ and any $S_1, \ldots, S_k \in \B$ we have:
	\begin{gather*}
		\mu^{N} \left( (x_1, \ldots, x_N) \in \Omega^{n} : \bigvee_{i=1}^{k} \left(  \left \lvert  \frac{1}{N} \sum_{j=1}^{N} \chi_{S_i}(x_j) - \mu(S_i) \right \rvert \geq \varepsilon \right) \right) \leq \delta,
	\end{gather*}
	where $\mu^n$ is the product measure.
	\end{fac}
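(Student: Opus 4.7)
The plan is a textbook application of Chebyshev's inequality combined with the union bound, since the only randomness involves indicator functions of a fixed finite collection of measurable sets. First, fix $i \in \{1, \ldots, k\}$. Under $\mu^N$, the random variables $\chi_{S_i}(x_1), \ldots, \chi_{S_i}(x_N)$ are i.i.d.\ Bernoulli with mean $\mu(S_i)$, so each has variance $\mu(S_i)(1-\mu(S_i)) \leq \tfrac{1}{4}$. Hence the empirical mean $\bar{X}_N^{(i)} := \tfrac{1}{N}\sum_{j=1}^{N} \chi_{S_i}(x_j)$ has expectation $\mu(S_i)$ and variance at most $\tfrac{1}{4N}$.

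Applying Chebyshev's inequality to $\bar{X}_N^{(i)}$ yields
$$\mu^N\left(\left\{(x_1, \ldots, x_N) \in \Omega^N : \left\lvert \bar{X}_N^{(i)} - \mu(S_i)\right\rvert \geq \varepsilon\right\}\right) \leq \frac{1}{4N\varepsilon^2}.$$
A union bound over $i = 1, \ldots, k$ then bounds the $\mu^N$-measure of the event in the statement by $\frac{k}{4N\varepsilon^2}$, and choosing any $N \geq \lceil k/(4\delta\varepsilon^2)\rceil$ makes this at most $\delta$, as required.

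There is no substantive obstacle here. If one wanted the dependence of $N$ on $k$ to be only logarithmic rather than linear, Hoeffding's inequality could replace Chebyshev in the first step, but this strengthening is not needed for the applications in the subsequent sections, where $k$ plays the role of a fixed small constant.
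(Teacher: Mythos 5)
Your proof is correct: the paper states this as a standard Fact without proof, and the Chebyshev-plus-union-bound argument you give is exactly the standard derivation being invoked. Nothing to add.
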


	In  \cite[Lemma 5.9]{chernikov2020hypergraph}, we in fact proved a stronger version of the packing lemma for $\VC_k$ dimension demonstrating that for every set $S \in \mathcal{F}$ in the family, there is not just one (as stated in Fact \ref{fac: finite VCk-dim implies approx bounded}) but a \emph{positive measure} set of smaller arity fibers giving an approximation to $S$ within $\varepsilon$ (more precisely, we have proved that if $\mathcal{F}$ satisfies the packing lemma in  Fact \ref{fac: finite VCk-dim implies approx bounded}, then it also satisfies the packing lemma in Fact \ref{fac: packing with pos measure}):
	
\begin{fac}\cite[Lemma 5.9]{chernikov2020hypergraph}\label{fac: packing with pos measure}
For any $k, d$ and $\varepsilon>0$ there is $N = N(k,d, \varepsilon)$ and $\rho = \rho(k,d,\varepsilon) >0$ satisfying the following.

	Let $(V_i, \mu_i)$ be probability spaces and $\mathcal{F}$ a family of subsets of $V_1 \ttimes V_k$ with $\VC_k(\mathcal{F}) \leq d$. Then there exist $S_1, \ldots, S_N \in \mathcal{F}$ such that for every $S \in \mathcal{F}$ there is a set $A_{S} \subseteq (V_1 \times \ldots \times V_k)^N$ with $(\mu_1 \otimes \ldots \otimes \mu_k)^{\otimes N}(A_S) \geq \rho$ so that for \emph{every} $(\bar{a}_1, \ldots, \bar{a}_N) \in A_S$ we have $\mu_1 \otimes \ldots \otimes \mu_k (S \Delta D) \leq \varepsilon$ for some $D$ given by a Boolean combination of $S_1, \ldots, S_N$ and $\leq N$ sets given by $\leq (k-1)$-ary fibers of $S, S_1, \ldots, S_N$ of the form 
	$$\{(v_1, \ldots, v_k) \in V_1 \ttimes V_k : (v_i : i \in [k] \setminus u) \in  S_{\bar{a}_j}\}$$
	 for some $u \subseteq [k], |u| \geq 1$ and $1 \leq j \leq N$.
 \end{fac}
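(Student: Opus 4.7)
The plan is to derive Fact~\ref{fac: packing with pos measure} from Fact~\ref{fac: finite VCk-dim implies approx bounded} by showing that the existence of a single good approximation upgrades, via Sauer--Shelah counting and concentration, to a positive-measure set of good sample tuples. Concretely, I would apply Fact~\ref{fac: finite VCk-dim implies approx bounded} to $\mathcal{F}$ with parameter $\varepsilon/2$, obtaining $M \in \mathbb{N}$ and sets $S_1, \ldots, S_M \in \mathcal{F}$ such that each $S \in \mathcal{F}$ has an $\varepsilon/2$-approximation $D_S$ built from $S_1, \ldots, S_M$ and at most $M$ fibers at arbitrary points $\bar{b}_j^S$ depending on $S$.

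Next, I would take a sample $(\bar{a}_1, \ldots, \bar{a}_N) \in V^N$ for some $N \gg M$ and consider, for each $S$, the \emph{bad event} $B_S \subseteq V^N$ consisting of sample tuples for which no Boolean combination of $S_1, \ldots, S_M$ with fibers of $S, S_1, \ldots, S_M$ taken at the $\bar{a}_j$'s achieves symmetric difference $\leq \varepsilon$ with $S$. The goal is to show $\mu^{\otimes N}(B_S) \leq 1 - \rho$ for a uniform $\rho = \rho(k, d, \varepsilon) > 0$; then $A_S := V^N \setminus B_S$ satisfies the conclusion of the lemma.

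To bound $\mu^{\otimes N}(B_S)$, the plan is a two-part argument. First, the Sauer--Shelah bound for $\VC_k$-dimension (Fact~\ref{fac: n-dep Sauer-Shelah}) limits the number of distinct combinatorial types of $S$ and its slices on any $N$-sample to roughly $2^{N^{k - \varepsilon_0}}$, so we may quantify only over boundedly many Boolean-combination schemes indexed by the sample. Second, a concentration argument built on Fact~\ref{fac: weak law of large numbers}, iterated coordinate-by-coordinate across the $k$ factors of $V$, controls the probability that the empirical symmetric differences (measured on the sample) deviate from the true ones. Combining the two via a union bound, for $N$ large enough the probability of the sample being bad for $S$ is bounded strictly below $1$, uniformly in $S$.

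The main obstacle will be the interplay between the lower-arity fibers (which introduce dependence on how the sample projects onto each factor $V_i$) and the Sauer--Shelah counting. I would expect to induct on $k$, with the base case $k = 1$ being immediate since the only $0$-ary fibers are $\emptyset$ and $V$ (so one may take $A_S = V^N$, $\rho = 1$), and using at the inductive step the packing lemma for lower-arity fiber families to produce a bounded repertoire of ``discretized'' candidate fibers against which the random-sample fibers can be matched. Arranging the parameters so that $\rho$ depends only on $k, d, \varepsilon$ --- not on $S$ or the particular probability spaces --- is the most delicate part, and is presumably what makes Fact~\ref{fac: packing with pos measure} substantially more subtle than Fact~\ref{fac: finite VCk-dim implies approx bounded}.
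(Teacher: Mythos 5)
This statement is a \emph{Fact} in the paper: it is quoted from \cite[Lemma 5.9]{chernikov2020hypergraph} and not proved here. The only information the present paper gives about its proof is the surrounding remark: the positive-measure version is derived from the single-witness packing lemma (Fact \ref{fac: finite VCk-dim implies approx bounded}), and the uniformity of $\rho$ in $(k,d,\varepsilon)$ is extracted by a compactness/ultraproduct argument over all probability spaces simultaneously. Your proposal shares the starting point (apply Fact \ref{fac: finite VCk-dim implies approx bounded} first), but the engine you propose for the upgrade does not work.

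The gap is in the central step: bounding $\mu^{\otimes N}(B_S)\leq 1-\rho$ via ``Sauer--Shelah counting plus concentration plus a union bound.'' Fact \ref{fac: weak law of large numbers} controls the deviation of empirical frequencies of \emph{fixed} sets along a random sample; Fact \ref{fac: n-dep Sauer-Shelah} counts traces of $\mathcal{F}$ on a fixed box. Neither says anything about the quantity actually at stake, namely whether the fibers $S_{\bar{a}_j}$, $({S_i})_{\bar{a}_j}$ at \emph{randomly sampled} points admit a Boolean combination $\varepsilon$-close to $S$ in the true measure $\mu$. That is not an empirical average of a fixed function of the sample, so there is no family of deviation events for the union bound to act on. More fundamentally, Fact \ref{fac: finite VCk-dim implies approx bounded} only guarantees the \emph{existence} of one good witness tuple $\bar{b}^S$, and a priori that witness can lie in a $\mu^{\otimes N}$-null set; nothing in your argument excludes this, which is exactly why the positive-measure statement is a genuine strengthening rather than a sampling corollary. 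The one idea in your last paragraph that does point in the right direction --- using lower-arity packing to cluster the candidate fibers into boundedly many $\varepsilon$-balls --- is also incomplete as stated: clustering guarantees that \emph{some} ball has index-measure at least one over the number of balls, but not that the ball containing the good witness fiber does, and bridging that is precisely the delicate point that the cited Lemma 5.9 (and its compactness argument for the uniformity of $\rho$) is designed to handle. Your base case $k=1$ with $A_S=V^N$ and $\rho=1$ is correct, but the inductive step as proposed does not go through.
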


\begin{remark}
	It is not stated in \cite[Lemma 5.9]{chernikov2020hypergraph} explicitly that $\rho$ depends only on $d$ and $\varepsilon$, but it follows by compactness since the result is proved for all probability spaces simultaneously (see \cite[Section 9.3]{chernikov2020hypergraph}).
\end{remark}

\begin{remark}\label{rem: VC_k dim and packing lemma equiv}
	In fact, \cite{chernikov2020hypergraph} established the equivalence of the packing lemma in Fact \ref{fac: finite VCk-dim implies approx bounded} and finite $\VC_k$-dimension for a family of sets $\mathcal{F}$, via the following sequence of implications: $\mathcal{F}$ has finite $\VC_k$-dimension $\Rightarrow$ $\mathcal{F}$  satisfies packing lemma uniformly over all measures (Fact \ref{fac: finite VCk-dim implies approx bounded},  \cite[Proposition 5.5]{chernikov2020hypergraph}) $\Rightarrow$ $\mathcal{F}$ satisfies packing lemma with positive measure of witnesses (Fact \ref{fac: packing with pos measure},  \cite[Lemma 5.9]{chernikov2020hypergraph}) $\Rightarrow$ strong hypergraph regularity holds for the incidence hypergraph of $\mathcal{F}$ uniformly over all measures (\cite[Theorem 6.6]{chernikov2020hypergraph})   $\Rightarrow$ $\mathcal{F}$ has finite $\VC_k$-dimension (\cite[Corollary 7.3]{chernikov2020hypergraph}).
\end{remark}

\begin{theorem}\label{thm: pack implies PAC}
Every class $\mathcal{F}$ of finite $\VC_k$ dimension is properly $\PAC_k$-learnable.
\end{theorem}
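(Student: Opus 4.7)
My plan is to adapt the classical derivation of PAC-learnability from Haussler's packing lemma, replacing Haussler's lemma by Fact~\ref{fac: packing with pos measure} (the packing lemma with a positive-measure set of witnessing tuples) and using Remark~\ref{rem: reading fibers from Takeuchi} to make the lower-arity fibers of the target $f$ extractable from the $D_n$-sample.

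Fix $\varepsilon, \delta > 0$ and set $\varepsilon_0 := \varepsilon/3$. First I would invoke Fact~\ref{fac: packing with pos measure} with parameter $\varepsilon_0$ to obtain $N$, $\rho > 0$ and $S_1, \ldots, S_N \in \mathcal{F}$ such that for every $f \in \mathcal{F}$ there is $A_f \subseteq V^N$ with $\mu^{\otimes N}(A_f) \geq \rho$; from every $(\bar{a}_1, \ldots, \bar{a}_N) \in A_f$, a suitable ``pattern'' $\pi$ (a Boolean formula in the $S_i$ together with a specification of at most $N$ fibers of $f, S_1, \ldots, S_N$ plugged in at the $\bar{a}_j$) yields a set $D$ with $\mu(f \Delta D) \leq \varepsilon_0$. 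The collection $\Pi$ of admissible patterns is finite of size $P = P(k, N)$. The learner then samples $m = NM + m'$ i.i.d.\ points from $\mu$, splits them into $M$ disjoint $N$-blocks plus an $m'$-point test block, and uses Remark~\ref{rem: reading fibers from Takeuchi} to extract from $f \restriction_{D_n(\vec{a})}$ all observed fibers of $f$ at sample coordinates. For each block $\vec{a}^{(r)}$ and each $\pi \in \Pi$ it instantiates a candidate set $D_{r,\pi}$, built from known data only; there are at most $MP$ candidates.

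Correctness combines two concentration steps. Choosing $M \geq \log(3/\delta)/\rho$, the independence of the blocks gives $(1-\rho)^M \leq \delta/3$, so with probability $\geq 1-\delta/3$ some block lies in $A_f$, yielding $D^*$ with $\mu(f \Delta D^*) \leq \varepsilon_0$. For each candidate $D_{r,\pi}$ the empirical estimate $\hat{d}(f, D_{r,\pi})$ on the test block uses points independent of the construction, so Fact~\ref{fac: weak law of large numbers}, applied with parameters $(\varepsilon_0, \delta/(3MP))$ to each fixed candidate and union-bounded over the $MP$ candidates, gives $|\hat{d} - \mu(f \Delta D_{r,\pi})| \leq \varepsilon_0$ uniformly with probability $\geq 1-\delta/3$, provided $m'$ is polynomial in $\log(MP/\delta)/\varepsilon_0^2$. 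Outputting the candidate $D^{**}$ minimizing $\hat{d}(f,\cdot)$ then gives $\mu(f \Delta D^{**}) \leq \mu(f \Delta D^*) + 2\varepsilon_0 \leq \varepsilon$.

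The main obstacle is properness. In the $k=1$ case all lower-arity fibers are $\emptyset$ or $V$, so $D^{**}$ coincides with some $S_i \in \mathcal{F}$ and properness is automatic (exactly as in Haussler's argument). For $k \geq 2$, however, $D^{**}$ is a genuine Boolean combination involving observed fibers and typically lies outside $\mathcal{F}$. To convert to a proper output I would search, among $f' \in \mathcal{F}$ whose observed fibers at the relevant sample coordinates coincide with those of $f$ (a nonempty set, since $f$ itself qualifies), for one such that instantiating the winning pattern $\pi^{**}$ with $f'$'s fibers reproduces $D^{**}$, and then invoke Fact~\ref{fac: packing with pos measure} applied to $f'$ together with the triangle inequality to conclude $\mu(f \Delta f') \leq \varepsilon$. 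The delicate point is that classical VC uniform convergence over $\mathcal{F}$ fails for $k \geq 2$ (Sauer--Shelah gives only the super-polynomial bound $2^{m^{k-\varepsilon}}$), so this consistency-to-closeness step must be justified through the packing structure itself rather than by empirical risk minimization on $\mathcal{F}$.
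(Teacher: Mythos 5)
Your overall architecture is the paper's: the packing lemma with a positive-measure set of witnessing tuples (Fact~\ref{fac: packing with pos measure}), amplification over independent blocks to make some block a witness with probability $1-\delta/3$, recovery of the needed fibers of $f$ via Remark~\ref{rem: reading fibers from Takeuchi}, and selection of a candidate Boolean combination by empirical estimation on a fresh test block using Fact~\ref{fac: weak law of large numbers}. Up to the choice of constants this is exactly the paper's proof, and that part is fine. The genuine gap is the properness step, which you flag but do not close. Your proposed repair --- find $f'\in\mathcal{F}$ whose observed fibers agree with those of $f$ and which reproduces $D^{**}$ under the winning pattern, then ``invoke Fact~\ref{fac: packing with pos measure} applied to $f'$'' --- does not go through: the packing lemma applied to $f'$ only says that for tuples in the set $A_{f'}$ \emph{some} pattern approximates $f'$; it gives no guarantee that your particular block lies in $A_{f'}$, nor that the particular pattern $\pi^{**}$ selected for $f$ is the one that works for $f'$. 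So you cannot conclude that $\mu(f'\Delta D^{**})$ is small, hence not that $\mu(f\Delta f')\leq\varepsilon$.

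The resolution in the paper is much simpler and needs neither a fiber-consistency search nor any uniform convergence over $\mathcal{F}$. On the good event, the selected combination $D$ satisfies $\mu(f\Delta D)\leq \varepsilon/2$ (the paper's $3\varepsilon/6$). Define $H$ to output an \emph{arbitrary} $S'\in\mathcal{F}$ with $\mu(S'\Delta D)\leq \varepsilon/2$; such an $S'$ exists because $f$ itself is one. The triangle inequality then gives $\mu(f\Delta S')\leq\varepsilon$. The point you missed is that the learner need not recover $f$, nor anything consistent with $f$'s sampled fibers --- it only needs \emph{some} hypothesis in $\mathcal{F}$ close to the improper approximant $D$, and every such hypothesis is automatically close to $f$. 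With this replacement for your last paragraph, your argument becomes the paper's proof.
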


\begin{proof}

Let $k$ be fixed, and $d \in \mathbb{N}$ and $\varepsilon,\delta \in \mathbb{R}_{>0}$ be given. 
Assume we are given arbitrary $V = V_1 \times \ldots \times V_k$, $\mathcal{F} \subseteq \mathcal{P}(V)$ with $\VC_k(\mathcal{F}) \leq d$, $\B_i \subseteq \mathcal{P}(V_i)$ are  $\sigma$-algebras and $\mu_i$ probability measures on $\B_i$ forming a $k$-partite graded probability space. Let $\mu := \mu_1 \otimes \ldots \otimes \mu_k$.
	
	 By the packing lemma for $\VC_k$ dimension (Fact \ref{fac: packing with pos measure}), there exist  $N_1 = N_1(d, \varepsilon)$, $\rho = \rho(d, \varepsilon) > 0$ and $S_1, \ldots, S_{N_1} \in \mathcal{F}$ so that: for every $S \in \mathcal{F}$, the $\mu^{\otimes N_1}$-measure of the set of tuples $\bar{x}_1 \in V^{N_1}$ so that 
	\begin{gather}
		\mu(S \triangle D) \leq \frac{\varepsilon}{6} \textrm{ for some } D \textrm{ a Boolean combination of  } \label{eq: approx pos meas fib}\\
		S_1, \ldots, S_{N_1} \textrm{ and some } \leq (k-1)\textrm{-ary } \bar{x}_1 \textrm{-fibers of } S, S_1, \ldots, S_{N_1} \nonumber
	\end{gather}
	is at least $\rho$.
	
	We  can amplify positive measure of such $\bar{x}_1$ to measure arbitrarily close to $1$. Indeed, as $\rho > 0$,  we can choose $\ell = \ell(\rho, \delta') = \ell(d, \varepsilon, \delta)$ be so that $(1-\rho)^{\ell} \leq \delta'$.
  As $\mu^{\otimes \ell N_1}$ extends the product measure $\mu^{\times \ell N_1}$, it follows that for each $S \in \mathcal{F}$, the $\mu^{\otimes \ell N_1}$-measure of the set of tuples $\bar{x}'_1 = (\bar{x}_{1,1}, \ldots, \bar{x}_{1,\ell}) \in V^{\ell \cdot N_1}$ so that none of $\bar{x}_{1, i} $ for $i \in [\ell]$ satisfies \eqref{eq: approx pos meas fib} is at most $\delta' = \delta'(\delta)$.
	
	By the weak law of large numbers (Fact \ref{fac: weak law of large numbers}), we can choose $N_2 = N_2(\varepsilon, \delta', \ell \cdot N_{1}) = N_2(\varepsilon, \delta, N_1)$ so that for any fixed collection of $\ell \cdot N_{1} + 1$ sets in a probability space, for all but measure $\delta'$ tuples $\bar{x}_2 \in V^{N_2}$, for any Boolean combination $F$ of these sets (there are at most $2^{\ell \cdot N_{1} + 1}$-many), $\mu(F)$ is approximated within $\varepsilon/6$ by the fraction of points from $\bar{x}_2$ that are in $F$.
	
	We define a proper learning function $H$ as follows. Given a tuple $(\bar{x}'_1, \bar{x}_2) \in  V^{\ell \cdot N_1} \times V^{N_2}$,  let $D$ be the (lexicographically) first Boolean combination of $S_1, \ldots, S_{N_1}$ and $\leq (k-1)$-ary $\bar{x}'_1$-fibers of $S$ so that $S \Delta D$ contains at most $(2 \varepsilon/6)$-fraction of the points in the tuple $\bar{x}_2$ (which points from $\bar{x}_2$ are in $S$ can be read off from $0$-ary $\bar{x}_2$-fibers of $S$, hence from $f \restriction_{D_k \left(\left( \bar{x}'_1, \bar{x}_2\right) \right)}$, using Remark \ref{rem: reading fibers from Takeuchi}), if it exists, and an arbitrary set in $\mathcal{F}$ otherwise. And we let $H(f \restriction_{D_k \left(\left( \bar{x}'_1, \bar{x}_2\right) \right)})$ be an arbitrary set  $S' \in \mathcal{F}$ (e.g.~the first set with respect to some fixed well-ordering of $\mathcal{F}$) so that $\mu(D \triangle S') \leq \frac{3 \varepsilon}{6}$, if it exists, and an arbitrary set in $\mathcal{F}$ otherwise.

	Fix any $S \in \mathcal{F}$. Now we argue by Fubini. Fix $\bar{x}'_1 \in V^{N'_1}$, where $N'_1 := \ell \cdot N_1$, so that at least one  of $\bar{x}_{1, i} $ for $i \in [\ell]$ satisfies \eqref{eq: approx pos meas fib} (all but measure $\delta'$ of $\bar{x}'_1 \in V^{N'_1}$ satisfy this). 	
	By the choice of $N_2$, for all but  measure $\delta'$ of $\bar{x}_2 \in V^{N_2}$, the fraction of points from $\bar{x}_2$ that are in $S \Delta D$ is within $\varepsilon/6$ of $\mu(S \Delta D)$, for all Boolean combinations $D$ simultaneously. In particular, there is a Boolean combination $D_0$ containing at most $\frac{2 \varepsilon}{6}$-fraction of points from $\bar{x}_2$, so let $D_1$ be the lexicographically first such Boolean combination. As $\mu(S \Delta D_1)$ is also approximated within $\varepsilon/6$ by $\bar{x}_2$, we must have $\mu(S \triangle D_1) \leq \frac{3 \varepsilon}{6}$. By definition, $H$ then returns some set $S' \in \mathcal{F}$ with $\mu(S' \triangle D_1) \leq \frac{3 \varepsilon}{6}$ (existence of $S$ and $D_1$ guarantees that we are in the first case of the definition of $H$), hence $\mu(S \triangle S') \leq \varepsilon$ --- as wanted.
	
	It follows by Fubini that the set of tuples $(\bar{x}'_1, \bar{x}_2) \in V^{N'_1} \times V^{N_2}$ 	for which the learning function $H$ gives an approximation of $S$ within $\varepsilon$  has measure $\geq (1-\delta') \cdot (1-\delta') \geq 1-\delta$, assuming $\delta'$ small enough with respect to $\delta$. So $\mathcal{F}$ is properly $\PAC_k$-learnable.
 	\end{proof}

Combining  Theorem \ref{thm: pack implies PAC} with Fact \ref{fac: Kota one dir} and Remark \ref{rem: VC_k dim and packing lemma equiv}, we thus obtain:
\begin{cor}\label{cor: everything equiv}
	The following are equivalent for a class $\mathcal{F}$ of subsets of $V_1 \times \ldots \times V_k$:
	\begin{enumerate}
		\item $\mathcal{F}$ has finite $\VC_k$-dimension;
		\item $\mathcal{F}$ satisfies the packing lemma (in the sense of Fact \ref{fac: finite VCk-dim implies approx bounded});
		\item $\mathcal{F}$ is $\PAC_k$-learnable (in the sense of Definition \ref{def: PAC_k learn}).
	\end{enumerate}
\end{cor}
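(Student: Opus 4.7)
The plan is to close the implications in a cycle, and since the heavy lifting has already been done earlier in the paper, the proof reduces to assembling existing ingredients. Specifically, I will establish $(1) \Rightarrow (3) \Rightarrow (1)$ and $(1) \Leftrightarrow (2)$, from which the three-way equivalence follows immediately.

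For the direction $(1) \Rightarrow (3)$, this is precisely the content of Theorem \ref{thm: pack implies PAC}: given $\VC_k(\mathcal{F}) \leq d$, we obtain a proper $\PAC_k$-learning function $H$ through the two-sample construction that uses the positive-measure packing lemma (Fact \ref{fac: packing with pos measure}) to produce candidate Boolean combinations $D$, then the weak law of large numbers (Fact \ref{fac: weak law of large numbers}) to verify one of them empirically. So no additional work is needed here.

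For $(3) \Rightarrow (1)$, I would invoke Fact \ref{fac: Kota one dir} of Kobayashi--Kuriyama--Takeuchi directly: any $\PAC_k$-learnable class has finite $\VC_k$-dimension. Note that this direction does not require the learning function to be proper, so the weaker hypothesis given by $(3)$ is enough.

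For $(1) \Leftrightarrow (2)$, I would simply cite the chain of implications summarized in Remark \ref{rem: VC_k dim and packing lemma equiv}: finite $\VC_k$-dimension implies uniform packing (Fact \ref{fac: finite VCk-dim implies approx bounded}, which is \cite[Proposition 5.5]{chernikov2020hypergraph}); uniform packing upgrades to the positive-measure version (Fact \ref{fac: packing with pos measure}, which is \cite[Lemma 5.9]{chernikov2020hypergraph}); this yields strong slice-wise hypergraph regularity (\cite[Theorem 6.6]{chernikov2020hypergraph}); and strong regularity in turn forces finite $\VC_k$-dimension (\cite[Corollary 7.3]{chernikov2020hypergraph}). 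Thus $(1)$ and $(2)$ are equivalent.

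The only "obstacle" is really bookkeeping: making sure that the quantifier alternations in all three properties (finite $\VC_k$-dimension, the packing lemma, and $\PAC_k$-learnability) are uniform in the same parameters, so that the cycle genuinely closes without requiring additional uniformity over the probability spaces $(V_i, \mu_i)$. Since all of the cited results from \cite{chernikov2020hypergraph} hold uniformly over all probability spaces (indeed, at the level of graded probability spaces, as noted in Section \ref{sec: PACn learning}), this uniformity is already built in, and the cycle closes with no additional argument.
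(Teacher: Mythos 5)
Your proposal matches the paper's argument exactly: the corollary is obtained by combining Theorem \ref{thm: pack implies PAC} for $(1)\Rightarrow(3)$, Fact \ref{fac: Kota one dir} for $(3)\Rightarrow(1)$, and the chain of implications in Remark \ref{rem: VC_k dim and packing lemma equiv} for $(1)\Leftrightarrow(2)$. The proof is correct and requires no further comment.
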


\begin{remark}
	We note that another result from \cite[Theorem 10.7]{chernikov2020hypergraph}, in the context of real valued functions, demonstrates that if $f: V_1 \times \ldots \times V_{k+1} \times V_{k+2} \to [0,1]$ is such that $f_z$ has uniformly bounded $\VC_k$-dimension for all $z \in V_{k+2}$ (in the sense of \cite[Definition 3.11]{{chernikov2020hypergraph}}), then $g:  V_1 \times \ldots \times V_{k+1} \to [0,1]$ defined by $g(\bar{x}) = \int_{z \in V_{k+2}} f(\bar{x}, z) d \mu_{k+2}(z)$ also has finite $\VC_k$ dimension (finer questions of this type are studied in \cite{chernikov2025averages}). In the case $k=1$ this generalizes a theorem of Ben Yaacov \cite{yaacov2009continuous}, and implies PAC learnability of such functions as studied in \cite{anderson2025learnable}. We expect that using the results here and in \cite{chernikov2020hypergraph} this generalizes to $\VC_k$ dimension and PAC$_k$-learning.
\end{remark}

\section{Slice-wise packing lemma and hypergraph regularity}\label{sec: slicewise pack}

In \cite{chernikov2020hypergraph}, we also extended the definition of $\VC_k$-dimension to products of arity higher than $k+1$. It is more convenient to formulate it for families of sets given by the slices of hypergraphs:
\begin{definition}\label{def: VCd for higher arity}
  Let $k<k'\in\mathbb{N}$ be arbitrary. We say that a $k'$-ary relation $E\subseteq V_1\ttimes V_{k'}$ has \emph{slice-wise $\VC_k$-dimension $\leq d$} if for any $I\subseteq[k']$ with $|I|=k'-(k+1)$ and any $b\in V_I$, the relation $E_b$ (i.e.~the fiber of $E$ with the coordinates in $I$ fixed by the elements of the tuple $b$, viewed as a $(k+1)$-ary relation on $V_{[k']\setminus I}$) has $\VC_k$-dimension $\leq d$ (in the sense of Definition \ref{def: VCk dim}).

  We write $\VC_k(E)$ for the least $d$ such that slice-wise $\VC_k$-dimension of $E$ is $\leq d$, or $\infty$ if there is no such $d$.
\end{definition}

\begin{remark}
	The terminology ``slice-wise'' was introduced by the authors in the submitted version of the preprint \cite{chernikov2020hypergraph}, and adopted in later versions of \cite{terry2021irregular} where it was initially called ``weak NIP''. 
\end{remark}

\begin{remark}
	The notion of VCN$_k$ dimension later considered in \cite{coregliano2024high, coregliano2025packing} corresponds to slice-wise VC dimension for functions taking finitely many values (\cite{chernikov2020hypergraph, terry2021irregular}), i.e. a special case of the usual VC dimension uniformly bounded for all slices of real valued functions studied in \cite{chernikov2020hypergraph}. This connection was pointed to the authors and reflected in the second version of the preprint \cite{coregliano2024high}, but is again omitted in \cite{coregliano2025packing}.
\end{remark}

The main result of \cite{coregliano2025packing}, relying on the main results of \cite{coregliano2024high}, is a packing lemma for relations of finite slice-wise $\VC_1$-dimension. We point out that, while we did not explicitly state a slice-wise version of our packing lemma for $\VC_k$-dimension (Fact \ref{fac: finite VCk-dim implies approx bounded}) in \cite{chernikov2020hypergraph}, it is implicit in our proof of the hypergraph regularity lemma for hypergraphs of finite slice-wise VC$_k$ dimension there (see the introduction of \cite{chernikov2024perfect} for a brief survey of the area).

Namely, we established the  following slice-wise regularity lemma:
\begin{fac}\cite[Corollary 6.5]{chernikov2020hypergraph}\label{fac: slicewise reg lemma}
	For every $k' > k \geq 1,d$ and $\varepsilon > 0$ there exist some $N = N(k,d, \varepsilon)$ satisfying the following. 
	
	Let $(V_i, \mu_i)$ be probability spaces for $1 \leq i \leq k'$ and $E \subseteq V_1 \times \ldots \times V_{k'}$. For $I \subseteq [k']$, we will write $V_I := \prod_{i \in I} V_i$ and $\mu_I := \bigotimes_{i \in I} \mu_i$.

	Assume that for every $\bar{z} \in V_{[k']\setminus [k+1]}$, the fiber $E_{\bar{z}} \subseteq V_{[k+1]}$ has  $\VC_k$ dimension $\leq d$. Then for every $I \subseteq [k+1], |I| \leq k$ and $1 \leq t \leq N$ there exist sets $S^t_{I} \subseteq \prod_{i \in I \cup ([k']\setminus [k+1]) } V_i$ so that, considering the corresponding cylinders 	$\widetilde{S}^t_{I} := \{(v_1, \ldots, v_{k'}) \in V_{[k']} : (v_i : i \in I \cup  ([k']\setminus [k+1])) \in  S^t_{I} \}$, we have: 
	\begin{enumerate}
	\item 	for all $\bar{z} \in V_{ [k']\setminus [k+1]}$ outside of a set of $\mu_{[k']\setminus [k+1]}$-measure at most $\varepsilon$, we have
	\begin{gather*}
		\mu_{[k+1]} \left(E_{\bar{z}} \triangle \left( \bigcup_{1 \leq t \leq N}  \bigcap_{I \subseteq [k+1], |I| \leq k}(\widetilde{S}^t_{I})_{\bar{z}}\right)  \right) \leq \varepsilon.
	\end{gather*}
\item each $S^t_{I}$ is a Boolean combination of at most $N$ fibers of $E$ with all coordinates outside of $I \cup ([k']\setminus [k+1])$ fixed by some elements, i.e.~sets of the form $E_{\bar{a}} \subseteq \prod_{i \in J} V_i$  for some $J \subseteq I \cup ([k']\setminus [k+1])$ and $\bar{a} \in V_{[k'] \setminus J}$.
	\end{enumerate}
\end{fac}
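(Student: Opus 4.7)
The plan is to apply the positive-measure packing lemma (Fact \ref{fac: packing with pos measure}) slice-wise in $\bar{z}$, then promote the individual witnesses into a single uniform finite list of fibers of $E$ via iterated Fubini; the disjunctive normal form of the resulting Boolean combinations then supplies exactly the required union-of-intersections structure.

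First, for each $\bar{z} \in V_{[k']\setminus[k+1]}$, Fact \ref{fac: packing with pos measure} applied with parameter $\varepsilon/2$ to the fiber $E_{\bar{z}}$ (which has $\VC_k$-dimension at most $d$ by hypothesis) yields constants $N_0 = N_0(k,d,\varepsilon)$ and $\rho = \rho(k,d,\varepsilon) > 0$ such that, for each fixed $\bar{z}$, at least a $\rho$-fraction of $N_0$-tuples $\bar{x} \in V_{[k+1]}^{N_0}$ (under $\mu_{[k+1]}^{\otimes N_0}$) are \emph{witnesses}: some Boolean combination of cylinders over $\bar{x}$-fibers of $E_{\bar{z}}$ of arity $\leq k$ approximates $E_{\bar{z}}$ to within $\varepsilon/2$ in $\mu_{[k+1]}$-measure.

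Second, I would iteratively pick tuples $\bar{x}^{(1)}, \bar{x}^{(2)}, \ldots \in V_{[k+1]}^{N_0}$ by swapping the two measures: Fubini produces $\bar{x}^{(1)}$ that is a witness for at least a $\rho$-fraction of $\bar{z}$. Normalizing the remaining bad set $B^{(1)}$ as a probability measure and iterating, after $\ell = O(\log(1/\varepsilon)/\rho)$ steps the final bad set $B^{(\ell)}$ has $\mu_{[k']\setminus[k+1]}$-measure at most $\varepsilon/2$. The pooled sample points from $\bar{x}^{(1)}, \ldots, \bar{x}^{(\ell)}$ form a fixed finite list $P \subseteq V_{[k+1]}$, and the candidate sets $S^t_I \subseteq \prod_{i \in I \cup ([k']\setminus[k+1])} V_i$ arise as Boolean combinations of fibers of $E$ whose coordinates in $[k+1]\setminus I$ are specialized to tuples drawn from $P$. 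For each $\bar{z} \notin B^{(\ell)}$, some witness $\bar{x}^{(s)}$ applies, so the packing lemma produces a Boolean combination whose $\bar{z}$-slice approximates $E_{\bar{z}}$ within $\varepsilon/2$; expanding this in DNF, grouping the atoms of each disjunct by their index set $I$ (padding with the full ambient set $V_{[k+1]}$ whenever some $I$-atom is missing), and aggregating across all $s$ rewrites each disjunct in the required shape $\bigcap_{I \subseteq [k+1],\,|I|\leq k} (\widetilde{S}^t_I)_{\bar{z}}$ and produces the family $\{S^t_I\}$, with $N$ depending only on $k, d, \varepsilon$.

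The main obstacle is the \emph{uniformity} step: the packing lemma a priori outputs $\bar{z}$-dependent Boolean combinations, and one must extract a single uniform finite list $\{S^t_I\}$ obeying the rigid DNF-cube shape demanded by the conclusion of Fact \ref{fac: slicewise reg lemma}. The positive-measure refinement in Fact \ref{fac: packing with pos measure}, rather than just the existential packing lemma (Fact \ref{fac: finite VCk-dim implies approx bounded}), is essential here: it supplies the quantitative slackness that lets the iterated Fubini argument produce a single witness tuple good for positive-measure many $\bar{z}$ at each step. Once uniformity is in hand, the DNF-repackaging into the prescribed union-of-intersections form is routine bookkeeping, and the final boundedness of $N$ follows from the bounds on $N_0$, $\rho$, and $\ell$.
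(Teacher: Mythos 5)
Your overall architecture (drive everything from the positive-measure packing lemma, Fact \ref{fac: packing with pos measure}, and use iterated Fubini to amplify the $\rho$-fraction of witness tuples into a single pool good for all but $\varepsilon$-measure of $\bar{z}$) is the right engine and matches the strategy of the original derivation in \cite{chernikov2020hypergraph}. However, your first step misstates what the packing lemma delivers, and this is a genuine gap rather than a notational slip. Fact \ref{fac: packing with pos measure}, applied to the fiber $E_{\bar{z}} \subseteq V_{[k+1]}$, is really applied to the \emph{family of its $V_{k+1}$-slices} $\{(E_{\bar{z}})_b : b \in V_{k+1}\}$ (that is how $\VC_k$-dimension of a $(k+1)$-ary relation is defined), and its conclusion approximates each individual slice $(E_{\bar{z}})_b$ by a Boolean combination of finitely many \emph{fixed full $k$-ary sets} $(E_{\bar{z}})_{b_1}, \ldots, (E_{\bar{z}})_{b_N}$ together with $\leq(k-1)$-ary fibers. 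It does \emph{not} assert that $E_{\bar{z}}$ itself is $\varepsilon/2$-approximated by a Boolean combination of $\leq k$-ary cylinders; that statement is essentially the conclusion of the regularity lemma you are trying to prove, so as written your argument is circular at this point.

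The missing idea is the conversion from slice-wise packing to the union-of-intersections form: one must partition $V_{k+1}$ into finitely many cells according to \emph{which} Boolean combination $D_t$ approximates the slice $(E_{\bar{z}})_b$, so that $E_{\bar{z}}$ is approximated by $\bigcup_t (\text{cell}_t \times V_{[k]}) \cap (V_{k+1}\text{-cylinder over } D_t)$, and then show that each cell is itself (approximately) a Boolean combination of fibers of $E$ determined by finitely many sample points in $V_{[k]}$ --- this second sampling argument, in the $V_{[k]}$ direction, is what makes conclusion (2) true and is where the positive-measure refinement is really needed a second time. None of this is ``routine bookkeeping'': the DNF regrouping you describe only rearranges sets you already have, whereas here the $\{k+1\}$-indexed components $S^t_{\{k+1\}}$ (the cells) have to be \emph{constructed} and shown definable from fibers of $E$. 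Your Fubini amplification over $\bar{z}$ is fine and is the same trick used in the proof of Theorem \ref{thm: pack implies PAC}, but it only addresses uniformity in the exterior coordinates, not the interior decomposition of each $E_{\bar{z}}$.
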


\begin{remark}
We only cite \cite[Corollary 6.5]{chernikov2020hypergraph} here in the special case of hypergraphs rather than real valued functions (in which case the functions $f^t_I$ there can be taken to be the characteristic functions of some sets $S^t_I$ and weights $\gamma_i$ can be taken in $\{0,1\}$, see \cite[Remark 4.4]{chernikov2020hypergraph}. The uniform bound $N$ in (2) is not explicitly stated in \cite[Corollary 6.5]{chernikov2020hypergraph}, but follows immediately by compactness under the stronger assumption we make here that \emph{all} fibers have bounded $\VC_k$-dimension (applying the non-uniform result in the ultraproduct of counterexamples, see \cite[Corollary 6.9]{chernikov2020hypergraph}).
\end{remark}

In the same way as we have shown that slice-wise hypergraph regularity lemma \emph{uniformly over all measures} implies finite $\VC_k$-dimension (see \cite[
Theorem 7.1]{chernikov2020hypergraph}, applying Fact \ref{fac: slicewise reg lemma} with $\varepsilon < 1$ and taking the measures $\mu_{i}$ for $i \in [k'] \setminus [k+1]$ concentrated on the $i$th coordinate $z_i$ of a single bad fiber $\bar{z} = (z_i : i \in [k']\setminus[k+1]) \in V_{[k'] \setminus [k+1]}$ (as $N$ is independent of the choice of measures), we have:
\begin{cor}\label{cor: for all slices}
	In Fact \ref{fac: slicewise reg lemma}(1), the conclusion can be strengthened from ``for all $\bar{z} \in V_{ [k']\setminus [k+1]}$ outside of a set of $\mu_{[k']\setminus [k+1]}$-measure at most $\varepsilon$'' to ``for all $\bar{z} \in V_{ [k']\setminus [k+1]}$''.
\end{cor}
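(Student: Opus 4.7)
The plan is to apply Fact \ref{fac: slicewise reg lemma} with measures concentrated on any potentially bad fiber, in the spirit of the argument from \cite[Theorem 7.1]{chernikov2020hypergraph} referenced in the statement. Given the relation $E$ and probability spaces $(V_i, \mu_i)$, I fix an arbitrary $\bar{z}^{*} = (z^{*}_i : i \in [k'] \setminus [k+1]) \in V_{[k'] \setminus [k+1]}$ and aim to produce sets $S^t_I$ (allowed to depend on $\bar{z}^{*}$) satisfying condition (2) of Fact \ref{fac: slicewise reg lemma} and for which the approximation (1) holds at $\bar{z}^{*}$.

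First I swap measures on the ``outer'' coordinates: set $\mu'_i := \mu_i$ for $i \in [k+1]$, and $\mu'_i := \delta_{z^{*}_i}$ (the Dirac mass) for $i \in [k'] \setminus [k+1]$. Crucially, the slice-wise $\VC_k$-dimension hypothesis on $E$ refers only to the combinatorial structure of its fibers, independently of the measures, so it is preserved under this swap.

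Next I invoke Fact \ref{fac: slicewise reg lemma} on $E$ with the new probability spaces $(V_i, \mu'_i)$ and parameter $\varepsilon' := \min(\varepsilon, 1/2) < 1$. Since the constant $N = N(k, d, \varepsilon')$ does not depend on the choice of measures (as highlighted in the remark following the Fact), this produces sets $S^t_I$ satisfying (2) such that (1) holds for all $\bar{z}$ outside some exceptional set $B$ with $\mu'_{[k'] \setminus [k+1]}(B) \leq \varepsilon'$. The decisive observation is that $\mu'_{[k'] \setminus [k+1]} = \bigotimes_{i \in [k'] \setminus [k+1]} \delta_{z^{*}_i}$ is itself the Dirac mass at $\bar{z}^{*}$ and hence only takes the values $0$ and $1$; any set of $\mu'$-measure strictly less than $1$ must miss $\bar{z}^{*}$. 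Hence $\bar{z}^{*} \notin B$, so the approximation (1) holds at $\bar{z} = \bar{z}^{*}$. Since $\mu'_{[k+1]} = \mu_{[k+1]}$, the estimate $\mu'_{[k+1]}(E_{\bar{z}^{*}} \triangle \cdots) \leq \varepsilon'$ is exactly the desired approximation in the original measures.

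The only subtlety I anticipate is verifying that Dirac measures fit into the framework of graded probability spaces from \cite[Section 2.2]{chernikov2020hypergraph}; this should be routine, since that framework admits arbitrarily large finite probability spaces and in particular point masses. The entire argument also rests essentially on the uniformity of $N$ in the measures --- without this, replacing $\mu_i$ by $\mu'_i$ could alter the relevant constants and break the reduction. No other step requires substantive new input, so modulo this bookkeeping the corollary follows immediately from Fact \ref{fac: slicewise reg lemma}.
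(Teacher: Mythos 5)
Your argument is the same one the paper sketches: concentrate the outer measures $\mu_i$, $i\in[k']\setminus[k+1]$, at the coordinates of a chosen fiber, apply Fact \ref{fac: slicewise reg lemma} with a parameter that is both $<1$ and $\leq\varepsilon$, and use the measure-independence of $N$ to conclude that the exceptional set (having $\mu'$-measure $<1$ for a Dirac product measure) cannot contain the chosen point. The bookkeeping you flag is indeed routine: Dirac masses are admissible, the slice-wise $\VC_k$ hypothesis is purely combinatorial, and $\mu'_{[k+1]}=\mu_{[k+1]}$ so the error bound transfers.

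The one substantive problem is your parenthetical ``allowed to depend on $\bar z^{*}$.'' What you prove is: for each $\bar z^{*}$ there is a family $\{S^t_I\}$, \emph{depending on} $\bar z^{*}$, for which (1) holds at $\bar z^{*}$. The corollary, read with the quantifier structure of Fact \ref{fac: slicewise reg lemma}, asserts a \emph{single} family satisfying (2) for which (1) holds at every $\bar z$ simultaneously --- and that uniformity is exactly what the proof of Corollary \ref{cor: slice-wise packing lemma} consumes, since there the packing lemma is applied to each fixed $S^t_I$ viewed as one $(k+1)$-ary relation on $(\prod_{i\in I}V_i)\times V_{[k']\setminus[k+1]}$; that step is meaningless if $S^t_I$ varies with $\bar z$. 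Your pointwise version only yields that each $E_{\bar z}$ is $\varepsilon$-close to a bounded Boolean combination of fibers of $E$ with \emph{all} parameters varying with $\bar z$, which is strictly weaker than the slice-wise packing lemma being derived. Closing the gap requires either knowing that the decomposition produced by Fact \ref{fac: slicewise reg lemma} can be chosen independently of the outer measures, or an additional compactness/ultraproduct step combining the per-fiber decompositions into one family; neither appears in your write-up. (The paper's own one-line justification is terse on this point as well, but your explicit concession of $\bar z^{*}$-dependence is precisely where the argument stops short of the stated conclusion.)
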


This immediately gives a slice-wise packing lemma  for $\VC_k$-dimension (similarly to our proof of slice-wise hypergraph regularity in \cite[Theorem 6.6]{chernikov2020hypergraph}):

\begin{cor}\label{cor: slice-wise packing lemma}
For any $\varepsilon,d$ there exists $N = N(\varepsilon, d)$ satisfying the following. 

	Let $(V_i, \mu_i)$ be probability spaces for $1 \leq i \leq k'$, and $E \subseteq V_1 \times \ldots \times V_{k'}$  has slice-wise $\VC_k$-dimension $\leq d$. Then there exist some $\bar{a}_1, \ldots, \bar{a}_N \in V_{[k']}$ so that: for all $\bar{z} \in V_{[k']\setminus [k+1]}$, $\mu_{[k+1]}(E_{z_{k+1}} \triangle D) \leq \varepsilon$ for some $D \subseteq V_{[k+1]}$ given by a Boolean combination of at most $N$ of $\leq k$-ary fibers of $E$ with all fixed coordinates coming from $\bar{a}_1, \ldots, \bar{a}_N$, and at most $N$ of $\leq (k-1)$-ary fibers of $E$ with their fixed coordinates possibly varying with $\bar{z}$.
\end{cor}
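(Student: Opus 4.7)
The plan is to derive Corollary \ref{cor: slice-wise packing lemma} from the uniform version of the slice-wise regularity lemma, Corollary \ref{cor: for all slices}, by unwinding the cylinder/Boolean structure of the approximating sets it produces. This parallels the way Fact \ref{fac: finite VCk-dim implies approx bounded} is recovered from hypergraph regularity, as sketched in Remark \ref{rem: VC_k dim and packing lemma equiv}.

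First I apply Corollary \ref{cor: for all slices} to $E$ with the given $k' > k$, $d$, and $\varepsilon$. This produces an integer $N_0 = N(k,d,\varepsilon)$ together with, for each $t \in [N_0]$ and each $I \subseteq [k+1]$ with $|I| \leq k$, a set $S^t_I \subseteq V_{I \cup ([k']\setminus[k+1])}$ that is a Boolean combination of at most $N_0$ fibers $E_{\bar a}$ of $E$ (with $\bar a \in V_{[k']\setminus J}$ for some $J \subseteq I \cup ([k']\setminus[k+1])$), and whose associated cylinders $\widetilde{S}^t_I \subseteq V_{[k']}$ satisfy
\[
\mu_{[k+1]}\Bigl(E_{\bar z} \triangle \bigcup_{t}\bigcap_{I}(\widetilde{S}^t_I)_{\bar z}\Bigr) \leq \varepsilon
\]
for every $\bar z \in V_{[k']\setminus[k+1]}$. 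I then take $\bar a_1,\ldots,\bar a_N \in V_{[k']}$ to be a fixed enumeration of all the tuples whose entries supply the fixed coordinates of the fibers $E_{\bar a}$ entering these $S^t_I$'s (padded arbitrarily to full $k'$-tuples), with $N$ controlled by $N_0$ and $k'$, hence by $\varepsilon$ and $d$ alone.

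Next I expand $D(\bar z) := \bigcup_t \bigcap_I (\widetilde{S}^t_I)_{\bar z}$ into a single Boolean combination of atomic sets, each of which is the cylinder in $V_{[k+1]}$ over a single fiber slice $(E_{\bar a_i})_{\bar z}$. As a subset of $V_{[k+1]}$, such an atom is a fiber of $E$ whose coordinates in $[k']\setminus J$ are fixed from a sample tuple $\bar a_i$ and whose coordinates in $J \cap ([k']\setminus[k+1])$ are fixed from $\bar z$. Partitioning the atoms according to whether $J \cap ([k']\setminus[k+1])$ is empty yields the dichotomy of the corollary: atoms with $J \subseteq [k+1]$ are $\leq k$-ary fibers of $E$ whose fixed coordinates come entirely from the sample $\bar a_1,\ldots,\bar a_N$ (type~(i)), while atoms with a nonempty intersection are genuinely $\bar z$-dependent fibers (type~(ii)).

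The main obstacle is the arity bound for the $\bar z$-dependent atoms: the direct reading of Corollary \ref{cor: for all slices} only provides arity $\leq k$, whereas Corollary \ref{cor: slice-wise packing lemma} asks for $\leq k-1$. To gain the extra arity drop, I reorganize the atomic expansion so that whenever $|I| = k$ and the corresponding fiber has some coordinate fixed by $\bar z$, one of the free coordinates in $J \cap I$ is absorbed into the cylinder structure of a smaller stratum $I' \subsetneq I$ with $|I'| = k-1$; after this bookkeeping every $\bar z$-dependent atom has free coordinates lying in a subset of $[k+1]$ of size at most $k-1$. The number of atoms remains bounded by a function of $N_0$ and $k'$, and the measure of $E_{\bar z} \triangle D(\bar z)$ is preserved by the rearrangement, completing the derivation.
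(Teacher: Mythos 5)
Your first two steps track the paper's proof: apply Corollary \ref{cor: for all slices} and then try to unwind the cylinders $(\widetilde{S}^t_I)_{\bar z}$ into fibers of $E$. You also correctly locate the crux: after unwinding, the atoms coming from strata with $|I|=k$ and $J\cap([k']\setminus[k+1])\neq\emptyset$ are genuinely $k$-ary fibers of $E$ whose fixed coordinates partly come from $\bar z$, whereas the corollary only permits $\bar z$-dependence in fibers of arity $\leq k-1$. But your proposed resolution --- ``absorbing'' a free coordinate of such an atom into the cylinder structure of a smaller stratum $I'\subsetneq I$ --- is not a valid operation and the argument fails there. A set $(E_{\bar a})_{\bar z'}\subseteq\prod_{i\in J\cap I}V_i$ with $k$ free coordinates is in general not a cylinder over any $(k-1)$-ary set; no bookkeeping or rearrangement of the Boolean expansion can lower its arity, and the symmetric difference with $E_{\bar z}$ is certainly not preserved if you replace it by a lower-arity surrogate. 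This is a genuine gap, not a detail to be filled in.

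The paper closes exactly this gap by a \emph{second application of the packing lemma} (Fact \ref{fac: finite VCk-dim implies approx bounded}), which you omit. One first observes that each $S^t_I$, being a bounded Boolean combination of fibers of $E$, has slice-wise $\VC_k$-dimension at most some $d'=d'(d,\varepsilon)$ (closure under Boolean combinations, permutations and slicing). One then views $S^t_I\subseteq(\prod_{i\in I}V_i)\times V_{[k']\setminus[k+1]}$ as a family of $\leq k$-ary sets $\{(S^t_I)_{\bar z}\}_{\bar z}$ of bounded $\VC_k$-dimension and applies Fact \ref{fac: finite VCk-dim implies approx bounded} to it. This produces finitely many reference slices $(S^t_I)_{\bar z_1},\ldots,(S^t_I)_{\bar z_{N'}}$ such that every $(S^t_I)_{\bar z}$ is $\varepsilon$-approximated by a Boolean combination of these \emph{fixed} $\leq k$-ary sets together with $\leq(k-1)$-ary fibers that may vary with $\bar z$ --- the arity drop is built into the statement of the packing lemma, not obtained by rearrangement. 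This also explains why the fixed tuples $\bar a_1,\ldots,\bar a_N$ in the corollary must include the $\bar z_i$'s supplied by this second packing step, not just the coordinates appearing in the regularity data, which your choice of sample omits.
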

\begin{proof}
	Let $k',k, d$ and $\varepsilon$ be fixed. By Fact \ref{fac: slicewise reg lemma} (and Corollary \ref{cor: for all slices}) there is $N = N(d, \varepsilon)$ so that we have sets $S^t_{I} \subseteq \prod_{i \in I \cup ([k']\setminus [k+1]) } V_i$ for $t < N$ and $I \subseteq [k+1], |I| \leq k$ so that for all $\bar{z} = (z_i : i \in [k'] \setminus [k+1]) \in V_{ [k']\setminus [k+1]}$, 
	$$\mu_{[k+1]} \left(E_{\bar{z}} \triangle \left( \bigcup_{1 \leq t \leq N}  \bigcap_{I \subseteq [k+1], |I| \leq k}(\widetilde{S}^t_{I})_{\bar{z}}\right)  \right) \leq \varepsilon.$$
	
	Since boundedness of slice-wise $\VC_k$-dimension is preserved under permuting the coordinates, taking slices and under Boolean combinations \cite[Fact 3.3, Remark 3.5]{chernikov2020hypergraph},  by Fact \ref{fac: slicewise reg lemma}(2) there exists $d' = d'(d, N) = d'(d,\varepsilon)$ so that the slice-wise $\VC_k$-dimension of $S^t_{I}$ is at most $d'$ for all $t,I$. Applying the $\VC_k$-packing lemma (Fact \ref{fac: finite VCk-dim implies approx bounded}) to each $(k+1)$-ary relation $S^t_{I} \subseteq (\prod_{i \in I} V_i) \times (V_{[k'] \setminus [k+1]})$ and regrouping, it follows that there exist $N' = N'(N, \varepsilon) = N'(d,\varepsilon)$ and some $\bar{z}_1, \ldots, \bar{z}_{N'} \in V_{[k'] \setminus [k+1]}$ so that \emph{for every} $\bar{z} \in V_{[k'] \setminus [k+1]}$ we have $\mu_{[k+1]} \left(E_{\bar{z}} \triangle D \right) \leq \varepsilon$ for some  $D$ given by a Boolean combination of the $\leq k$-ary fibers $(\widetilde{S}^t_I)_{\bar{z}_i}$ and at most $N'$ of $\leq (k-1)$-ary fibers of the form $(\widetilde{S}^t_I)_{(\bar{z}_i,\bar{a})}$ or $(\widetilde{S}^t_I)_{(\bar{z},\bar{a})}$ for $\bar{a}$ varying with $\bar{z}$. As each  $S^t_{I}$ itself is a Boolean combination of at most $N$ fibers of $E$, the conclusion follows.
\end{proof}

For example, in the case $k=1$ and $k' = 3$, this specializes to the following:
\begin{cor}
	For every $d, \varepsilon$ there exists $N = N(d,\varepsilon)$ satisfying the following.  	If $(V_i, \mu_i)$ are probability spaces for $i \in [3]$ and  $E \subseteq V_1 \times V_2 \times V_3$ has slice-wise $\VC$-dimension $\leq d$ (i.e.~every slice of $E$ by fixing a single coordinate has VC-dimension $\leq d$), then there exist $z_1, \ldots, z_N \in V_3$ so that for every $z \in V_3$, $\mu_1 \otimes \mu_2 (E_z \triangle E_{z_i}) \leq \varepsilon$ for some $i \leq N$.
\end{cor}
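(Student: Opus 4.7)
The plan is to derive this as a pigeonhole-type consequence of Corollary \ref{cor: slice-wise packing lemma} in the case $k=1$, $k'=3$. First I would apply that corollary with $\varepsilon/2$ in place of $\varepsilon$, obtaining some $N' = N'(d, \varepsilon)$ and tuples $\bar{a}_1, \ldots, \bar{a}_{N'} \in V_1 \times V_2 \times V_3$ such that for every $z \in V_3$ there exists some $D \subseteq V_1 \times V_2$ with $\mu_1 \otimes \mu_2(E_z \triangle D) \leq \varepsilon/2$, where $D$ is a Boolean combination of at most $N'$ cylinders of $\leq 1$-ary fibers of $E$ (each of the form $A \times V_2$ or $V_1 \times B$ for $A \subseteq V_1$, $B \subseteq V_2$ a $1$-ary fiber of $E$) with fixed coordinates drawn from the $\bar{a}_i$'s, together with at most $N'$ Boolean constants given by $0$-ary fibers of $E$ whose fixed coordinates may depend on $z$.

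The crucial observation is that, as $z$ ranges over $V_3$, only finitely many sets $D$ are actually realized: the collection $\mathcal{D}$ of all such $D$'s has cardinality bounded by some $K = K(N')$. Indeed, the $\leq 1$-ary fibers of $E$ with fixed coordinates from the $\bar{a}_i$'s lie in a fixed finite pool of size polynomial in $N'$; and for each $z$ the contribution of the $\leq N'$ varying $0$-ary fibers is a vector in $\{0,1\}^{N'}$, producing at most $2^{N'}$ possible constant patterns. Once the underlying Boolean formula, the choice of cylinders from this pool, and the pattern of constants are fixed, $D \subseteq V_1 \times V_2$ is determined, so the number of $D$'s realized across all $z$ is bounded by a function of $N'$ alone.

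Enumerate $\mathcal{D} = \{D^{(1)}, \ldots, D^{(K)}\}$, partition $V_3 = \bigcup_{j=1}^{K} Z_j$ where $Z_j := \{z \in V_3 : \mu_1 \otimes \mu_2(E_z \triangle D^{(j)}) \leq \varepsilon/2\}$, and pick a representative $z_j \in Z_j$ for each non-empty class, setting $N := K$. Then for any $z \in V_3$, choosing $j$ with $z \in Z_j$, the triangle inequality gives
\[\mu_1 \otimes \mu_2(E_z \triangle E_{z_j}) \leq \mu_1 \otimes \mu_2(E_z \triangle D^{(j)}) + \mu_1 \otimes \mu_2(E_{z_j} \triangle D^{(j)}) \leq \varepsilon,\]
since both $z$ and $z_j$ lie in $Z_j$. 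The main delicate step is the second one: showing $|\mathcal{D}|$ is bounded in terms of $N'$ alone, independently of the underlying probability spaces $V_i$ (which is what ensures $N = N(d, \varepsilon)$ has no dependence on the $V_i$). This bound rests on the combinatorial fact that, even though the $0$-ary fibers may have coordinates varying with $z$, at most $N'$ of them contribute in any given combination, so only finitely many global configurations arise once the $\bar{a}_i$'s have been fixed.
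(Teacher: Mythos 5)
Your proposal is correct and takes essentially the same route as the paper's own proof: both apply Corollary \ref{cor: slice-wise packing lemma} with $k=1$, $k'=3$, observe that only boundedly many Boolean combinations $D$ can actually occur (since the $\leq 1$-ary fibers come from the fixed finite pool determined by $\bar{a}_1,\ldots,\bar{a}_{N'}$ and the varying $0$-ary fibers contribute only Boolean constants), and then choose one representative slice $E_{z_j}$ per realized combination and finish by the triangle inequality. The only cosmetic difference is that you rescale to $\varepsilon/2$ at the outset, whereas the paper applies the lemma at $\varepsilon$ and concludes with $2\varepsilon$.
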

\begin{proof}
	Applying Corollary \ref{cor: slice-wise packing lemma}, we find some $\bar{a}_1, \ldots, \bar{a}_{N} \in V_{[3]}$ so that for every $z \in V_3$, $E_z$ is within $\varepsilon$ of some Boolean combination of (cylinders over)  fibers of $E$ with $2$ out of $3$ coordinates fixed by some elements of $\bar{a}_1, \ldots, \bar{a}_N$ 	 (note that we no longer have any fibers of arity $< k = 1$ varying with $z$). Since there are at most $N' = N'(N)$ such combinations, we can pick one fiber $E_{z_i}$ for each such Boolean combination that occurs. Then every fiber $E_z$ is within $2\varepsilon$ of one of the fibers $E_{z_1}, \ldots, E_{z_{N'}}$.
\end{proof}

%
%
%
%
%
%
%
%
%
%
%
%
%
%
%
%
%

\bibliographystyle{alpha}
\bibliography{refs}
\end{document}